\begin{document}

\twocolumn[
\icmltitle{Constrained Online Convex Optimization with Polyak Feasibility Steps}



\icmlsetsymbol{equal}{*}

\begin{icmlauthorlist}
\icmlauthor{Spencer Hutchinson}{ucsb}
\icmlauthor{Mahnoosh Alizadeh}{ucsb}
\end{icmlauthorlist}

\icmlaffiliation{ucsb}{Department of Electrical and Computer Engineering, University of California-Santa Barbara, Santa Barbara, California, USA}

\icmlcorrespondingauthor{Spencer Hutchinson}{shutchinson@ucsb.edu}
\icmlcorrespondingauthor{Mahnoosh Alizadeh}{alizadeh@ucsb.edu}

\icmlkeywords{Machine Learning, ICML}

\vskip 0.3in
]

\printAffiliationsAndNotice{}

\begin{abstract}
    In this work, we study online convex optimization with a fixed constraint function $g : \Rb^d \rightarrow \Rb$.
    Prior work on this problem has shown $\Oc(\sqrt{T})$ regret and \emph{cumulative} constraint satisfaction $\sum_{t=1}^{T} g(\bx_t) \leq 0$, while only accessing the constraint value and subgradient at the played actions $g(\bx_t), \partial g(\bx_t)$.
    Using the same constraint information, we show a stronger guarantee of \emph{anytime} constraint satisfaction $g(\bx_t) \leq 0 \ \forall t \in [T]$, and matching $\Oc(\sqrt{T})$ regret guarantees.
    These contributions are thanks to our approach of using \emph{Polyak feasibility steps} to ensure constraint satisfaction, without sacrificing regret.
    Specifically, after each step of online gradient descent, our algorithm applies a subgradient descent step on the constraint function where the step-size is chosen according to the celebrated Polyak step-size.
    We further validate this approach with numerical experiments. 
\end{abstract}

\section{Introduction}

We study the problem of online convex optimization (OCO) where, in each round $t = 1,2,...,T$, a player chooses an action $\bx_t$ from an action set $\Xc \subseteq \Rb^d$ and then suffers the cost $f_t(\bx_t)$ according to an adversarially-chosen convex function $f_t$ \citep{zinkevich2003online}.
The player's goal is to minimize the regret with respect to the best single action in hindsight,
\begin{equation*}
    \reg_T := \sum_{t=1}^{T} f_t (\bx_t) - \min_{\bx \in \Xc}\sum_{t=1}^{T} f_t (\bx).
\end{equation*}

The OCO problem has emerged as a fundamental setting for various machine learning domains, such as stochastic optimization \cite{cesa2004generalization},  non-convex optimization \cite{cutkosky2023optimal}, and online control \cite{agarwal2019online}.
Furthermore, the OCO problem is directly relevant to various real-world settings, including online advertising \cite{mcmahan2013ad}, internet of things \cite{chen2018bandit}, and healthcare \cite{tewari2017ads}.

Despite the significance of the OCO problem, classical methods for OCO, such as online gradient descent (OGD) \citep{zinkevich2003online} and regularized follow the leader (RFTL) \citep{shalev2007primal}, require orthogonal projections (or similarly costly operations) to ensure that the chosen actions are feasible.
Such operations are prohibitively expensive in many applications, particularly when the action set $\Xc$ has a complicated structure.
Motivated by the high computation cost of classical methods, \citet{mahdavi2012trading} introduced the problem of \emph{OCO with long-term constraints}, where the player is \emph{not} required to ensure that the actions are feasible, but instead aims to satisfy the constraints \emph{cumulatively}.

\begin{table*}[t!]
    \caption{State-of-the-art algorithms for OCO with a non-smooth functional constraint $\Xc = \{ \bx \in \Rb^d : g(\bx) \leq 0 \}$ and first-order feedback $g_t = g(\bx_t), \bs_t \in \partial g(\bx_t)$. Our results are highlighted with \colorbox{Gainsboro}{gray}. The results marked with * have been extended with Assumption 1 and Theorem 7 in \citet{mahdavi2012trading} as we show in Appendix \ref{apx:ext}. The column ``Known Strictly-Feasible Point?'' is marked ``Yes'' if the algorithm requires knowledge of an $\bx \in \Rb^d$ such that $g(\bx) < 0$, and ``No'' if not.}
    \label{tbl:comp}
\centering
\vspace{0.05in}
\begin{tabular}{c c c c}
\toprule
\textbf{Reference} & \textbf{Regret} & \textbf{Constraint Guarantee} & \textbf{Known Strictly-Feasible Point?}
\\
    \midrule
    \citet{mahdavi2012trading} & $\Oc(T^{3/4})$ & $\sum_{t=1}^T g (\bx_t) \leq 0$ & No \\
    \citet{jenatton2016adaptive} & $\Oc(T^{2/3})$ & $\sum_{t=1}^T g (\bx_t) \leq 0$ & No \\
    \citet{yuan2018online}* & $\Oc(T^{2/3})$ & $\sum_{t=1}^T g (\bx_t) \leq 0$ & No \\
    \citet{yu2017online}* & $\Oc(\sqrt{T})$ & $\sum_{t=1}^T g (\bx_t) \leq 0$ & No \\
    \rowcolor{Gainsboro}
    Corollary \ref{cor:no_viol} & $\Oc(\sqrt{T})$ & $g (\bx_t) \leq 0,\ \forall t \in [T]$ & Yes \\
    \rowcolor{Gainsboro}
    Corollary \ref{cor:late_sat} & $\Oc(\sqrt{T})$ & $g (\bx_t) \leq 0,\ \forall t \geq t_0 = \Oc(\log(T))$ & No \\
    \rowcolor{Gainsboro}
    Corollary \ref{cor:late_sat} & $\Oc(\sqrt{T})$ & $\sum_{t=1}^{T} g (\bx_t) \leq 0$ & No \\
    \bottomrule
\end{tabular}
\end{table*}

In particular, \citet{mahdavi2012trading} considered the action set to be represented by the sub-level set of a constraint function, i.e. $\Xc = \{ x \in \Rb^d : g(x) \leq 0 \}$ for convex\footnote{Since the constraint function $g : \Rb^d \rightarrow \Rb$ is \emph{not} assumed to be differentiable, this setting can handle multiple constraints $g_1,...,g_n$ by defining $g(\bx) := \max_{i \in [n]} g_i (\bx)$.} $g:\Rb^d \rightarrow \Rb$.
In this setting, \citet{mahdavi2012trading} gave an algorithm that enjoys $\Oc(T^{3/4})$ regret and ensures cumulative constraint satisfaction $\sum_{t=1}^{T} g(\bx_t) \leq 0$, while only observing a subgradient and function value at the played actions, i.e. $g_t = g(\bx_t), \bs_t \in \partial g(\bx_t)$.
This algorithm uses a \emph{primal-dual} approach, in which it updates sequences of actions (primal variables) and constraint penalties (dual variables) to simultaneously minimize the regret and the cumulative constraint value. 
Primal-dual approaches have continued to be successful for this problem, as more recent algorithms of this type have been shown to enjoy $\Oc(\sqrt{T})$ regret with the same constraint guarantees and constraint feedback \cite{yu2017online}.

In this work, we take a new approach to this problem by using what we call \emph{Polyak feasibility steps}, which are subgradient descent steps that are taken with respect to the constraint function and use the Polyak stepsize.
This approach is motivated by the fact that the Polyak stepsize is known to be effective in unconstrained convex optimization \cite{polyak1969minimization}.
As such, we expect it to be similarly effective when applied to the constraint function in OCO.
Accordingly, we design an algorithm that alternates between gradient descent steps with respect to the cost function, and Polyak feasibility steps.
This algorithm differs from primal-dual approaches in that it maintains a \textit{single} sequence of iterates (versus the \textit{two} sequences used by primal-dual algorithms).

We find that our approach enjoys $\Oc(\sqrt{T})$ regret and \emph{anytime constraint satisfaction} $g(\bx_t) \leq 0 \ \forall t$, while still only observing a subgradient and function value of the constraint at the played actions.
Unlike prior methods for constrained OCO, which often trade feasibility for efficiency, our approach enjoys both feasibility \emph{and} efficiency.
Indeed, our algorithm maintains constraint satisfaction in all rounds, while using the same constraint feedback as prior work and avoiding the use of projections.
Furthermore, our approach is relevant to safety-critical applications, in which constraint satisfaction is paramount and there is often only limited constraint information available.

Our complete results are shown alongside prior work in Table \ref{tbl:comp}.
As presented in this table, prior work has shown \emph{cumulative} constraint satisfaction $\sum_{t=1}^{T} g(\bx_t) \leq 0$ and regret bounds as tight as $\Oc(\sqrt{T})$ \cite{mahdavi2012trading,jenatton2016adaptive,yuan2018online,yu2017online}.
We show stronger \emph{anytime} constraint satisfaction $g(\bx_t) \leq 0\ \forall t \in [T]$, while also guaranteeing $\Oc(\sqrt{T})$ regret.
However, unlike prior work, these guarantees require that there is a known point $\bx \in \Rb^d$ that is strictly-feasible $g(\bx) < 0$.
Nonetheless, we show that when a strictly-feasible point is \emph{not} known, then we can guarantee constraint satisfaction after $\Oc(\log(T))$ rounds, as well as cumulative constraint satisfaction.

Also, note that Table \ref{tbl:comp} only includes algorithms that access the constraint function via the constraint value and subgradient at the played actions $g_t = g(\bx_t), \bs_t \in \partial g(\bx_t)$.
Therefore, it does not include the line of literature in constrained OCO that solves a convex optimization problem with the constraint function in each round, e.g. \cite{yu2020low,yi2021regret,guo2022online}.
We discuss these related works in more detail in the following section.

\subsection{Related Work}
\label{sec:rel_work}

In this section, we discuss related work on constrained OCO, projection-free OCO, and constrained optimization.

\subsubsection{Constrained OCO}

\citet{mahdavi2012trading} studied OCO with a fixed convex constraint function $g$, and gave an algorithm with $\Oc(\sqrt{T})$ regret and $\Oc(T^{3/4})$ cumulative violation $\sum_{t=1}^T g(\bx_t)$ that used only first-order constraint feedback, $g_t = g(\bx_t), \bs_t \in \partial g(\bx_t)$.
This result was then generalized to $\Oc(T^{\max(\beta,1-\beta)})$ regret and $\Oc(T^{1 - \beta/2})$ cumulative violation for any $\beta \in (0,1)$ by \citet{jenatton2016adaptive}.
The same bounds were shown for a stronger notion of constraint violation $\sum_{t=1}^T [g(x_t)]_+$ by \citet{yuan2018online}, who also guaranteed that $\sum_{t=1}^T ([g(x_t)]_+)^2 = \Oc(T^{1 - \beta})$.
Finally, \citet{yu2017online} showed $\Oc(\sqrt{T})$ regret and $\Oc(\sqrt{T})$ cumulative violation in the same setting.
With the additional assumption that the constraint gradient is lower bounded near the constraint boundary (Assumption 1 in \citet{mahdavi2012trading}), the aforementioned results can be extended to guarantee no cumulative violation $\sum_{t=1}^T g(\bx_t) \leq 0$ as stated in Table \ref{tbl:comp}.
We use the same assumption to show constraint satisfaction for all rounds $g(x_t) \leq 0\ \forall t$ provided that there is a known strictly-feasible point.
Furthermore, the aforementioned works use primal-dual algorithms, which are fundamentally different from our approach of Polyak feasibility steps.
In particular, our approach uses the Polyak step-size and a \emph{single} sequence of iterates, while primal-dual algorithms use two iterate sequences that are linked via the cost and constraint functions.

There is also a line of literature on OCO with constraints that solves an optimization problem involving the constraint function in each round, e.g. \cite{yu2020low,yi2021regret,yi2022regret,guo2022online}.
This differs from our algorithm and those compared in Table \ref{tbl:comp}, which only access the constraint with first-order feedback at the played actions.
We note that solving an optimization problem with the constraint function in each round can introduce significant computational cost.

Lastly, we point out that a related line of literature considers OCO with time-varying constraints, e.g. \citep{neely2017online,yu2017online,liakopoulos2019cautious,castiglioni2022unifying,guo2022online,kolev2023online}.
This generalizes the fixed constraints that we consider.
Nonetheless, to our knowledge, none of these works improve on the guarantees shown in Table \ref{tbl:comp} for the case of non-smooth constraints, and first-order feedback.
We also point out that \citet{kolev2023online} gives an algorithm that uses first-order feedback and enjoys bounds on the violation in each round, i.e. $g(x_t) \leq \Oc(\frac{1}{\sqrt{t}}$).
However, this work requires that constraint functions are smooth (i.e. have Lipschitz gradients), making it distinct from our work and those considered in Table \ref{tbl:comp}.
Furthermore, \citet{kolev2023online} uses a projection on to a polytope in each round, which introduces additional computational cost.

\subsubsection{Projection-free OCO}

In parallel to the literature on constrained OCO, there is a line of literature that considers projection-free OCO where the feasible set is not treated as the sub-level set of a function, but rather defined as an arbitrary convex set.
\citet{hazan2012projection} initiated this literature by giving an algorithm that accesses the feasible set via a linear optimization oracle (LOO) instead of using projections.
This is advantageous because the LOO is often computationally cheaper than the projection.
LOO-based algorithms have received a significant amount of attention, e.g. \citep{hazan2012projection,chen2019projection,garber2020improved,garber2022new,wang2024non,garberprojection}.
The state-of-the-art for LOO-based algorithms with general convex costs and general convex feasible sets is $\Oc(T^{3/4})$ regret and $1$ oracle call per a round \cite{hazan2012projection}.
Although our methods give smaller $\Oc(\sqrt{T})$ regret, we note that the first-order feedback that we use is generally incomparable to the LOO oracle in terms of computationally complexity.
Indeed, the first-order feedback is cheaper to compute for some constraint functions, while the LOO is cheaper to compute for other constraint functions.\footnote{An example of this, pointed out by \citet{garber2022new}, is the difference in computational complexity for first-order information and LOO for the nuclear norm ball $\Bb_*$ and spectral norm ball $\Bb_2$ in the space of matrices. Computing first-order information for $\Bb_*$ and $\Bb_2$ is at worst a full-rank SVD (which is expensive) and rank-one SVD (which is cheap), respectively. For the LOO, the opposite is true in the $\Bb_*$ requires a rank-one SVD and $\Bb_2$ a full-rank SVD.}

There is also a growing body of literature that uses the membership oracle (MO) or separation oracle (SO) to access the feasible set, e.g. \citep{levy2019projection,garber2022new,mhammedi2022efficient,lu2023projection,hu2024riemannian,mhammedi2024online}.
The MO and SO are defined as follows.
Given a query point $\bx \in \Rb^d$, the MO specifies whether or not $\bx$ is in the feasible set, while the SO returns a hyperplane that separates $\bx$ from the feasible set (if $\bx$ is not in the feasible set).
If the feasible set is the sub-level set of a constraint function $g$ (as we consider), then the MO can be constructed by checking if $g(\bx) > 0$, and the SO can be constructed using the first-order information at $\bx$, i.e. $g(\bx),\partial g(\bx)$.\footnote{Given the query point $\bx$ and first-order information at this point $g = g(\bx),\bs \in \partial g(\bx)$, it follows from convexity that $\{ \by \in \Rb^d : g + \bs^\top(\by - \bx) \leq 0\}$ is a separating hyperplane w.r.t. $\Xc = \{ \bx \in \Rb^d : g(\bx) \leq 0 \}$.}
We use first-order information only at the played actions, which can therefore only be used to construct a separation oracle and membership oracle \emph{at the played action}.
This is distinct from existing MO and SO-based algorithms, which query the SO and MO at arbitrary points (not just the played actions).
Furthermore, existing MO-based and SO-based algorithms require \emph{multiple} oracle calls per a round.
Specifically, existing MO-based algorithms use $\Oc(d \log(T))$ oracle calls per a round \cite{lu2023projection,mhammedi2022efficient}, and existing SO-based algorithms use $\Oc(\log(T))$ oracle calls per a round \cite{mhammedi2022efficient,mhammedi2024online} or $\Oc(\kappa)$ oracle calls per a round \cite{garber2022new}.
Note that we state these bounds for unrestricted $T$ and use $\kappa$ to refer to the eccentricity of the feasible set, i.e. $\kappa = R/r$ with $r \Bb \subseteq \Xc \subseteq R \Bb$, which can be arbitrarily large.
The fact that our algorithm only requires $1$ oracle call per a round can result in significant performance advantages over these methods, particularly when the constraint is costly to evaluate.
Furthermore, our approach is applicable to settings where there is only \emph{local} constraint information available.

\subsubsection{Constrained Optimization}

Our approach is inspired by a line of literature in constrained (offline) optimization that uses the Polyak step-size to ensure convergence to the feasible set, e.g. \cite{polyak2001random,nedic2011random,nedic2019random,necoara2022stochastic}.
However, we point out two key difficulties that arise in the OCO setting: (a) the suboptimality gap of the iterates $f_t(\bx_t) - f_t(\bx^\star)$ is \emph{not} guaranteed to be non-negative, and (b) the constraint feedback is at the played action $\bx_t$ and not at the ``intermediate'' iterate (labeled $\by_t$ in Algorithm \ref{alg:ogd_pfs}).
Challenge (a) is particularly difficult to handle because the analysis approach used by \citet{nedic2011random} (and following works) relies on the suboptimality gap of the iterates being non-negative.
As a result, we require a new analysis approach.
However, our approach does suffer a larger dependence on the problem parameters (such as the subgradient bound $G_g$), which can be seen as the ``cost'' of the adversarial online setting.
To handle challenge (b), we use the first-order approximation of the cost function $g_t + \bs_t^\top (\by_t - \bx_t)$ in the Polyak step-size instead of the true cost function $g(\by_t)$.
This first-order approximation avoids the need for constraint information at the intermediate iterate $\by_t$, while maintaining the advantageous properties of the Polyak step-size (see Section \ref{sec:feas_anal} for the details).
Such a first-order approximation is also advantageous in the offline setup as seen in the recent work of \citet{singh2024stochastic}, but the analysis in \citet{singh2024stochastic} does not extend to the OCO setting because it requires that the suboptimality gap is non-negative (i.e. $f(x_t) - f(x^\star) \geq 0$) and that the cost functions have Lipschitz gradients.
Related algorithm designs have also been considered in the literature on constrained variational inequalities \cite{zhang2025primal}, although under the assumption that the constraint functions have Lipschitz gradients (which does not hold in our setting).
We further discuss the differences between our algorithm and the algorithm in \citet{zhang2025primal} in Appendix \ref{apx:alg_comp}.

\subsection{Notation}

We use $\Oc(\cdot)$ for big-O notation and $\Octil(\cdot)$ for the same ignoring log factors.
The 2-norm is denoted by $\| \cdot \|$ and the 2-norm ball is denoted by $\Bb = \{ \bx \in \Rb^d : \| \bx \| \leq 1 \}$.
Given a natural number $n$, we use the notation $[n] := \{1, 2, ..., n \}$.
The transpose of a matrix $M$ is denoted $M^\top$.
A vector of ones and zeros is denoted by $\bone$ and $\bzero$, respectively.
For a given $x \in \Rb$, we use the notation $[x]_+ = \max(x,0)$.
Lastly, for a point $\bz \in \Rb^d$ and closed convex set $\Yc \subseteq \Rb^d$, we use the notation $\dist(\bz, \Yc) = \min_{\by \in \Yc} \| \bz - \by \|$ and $\Pi_{\Yc}(\bz) = \argmin_{\by \in \Yc} \| \bz - \by \|$.

\subsection{Overview}

We specify the problem of OCO with constraints in Section \ref{sec:prob}.
Then, in Section \ref{sec:alg}, we give an algorithm for this problem that uses our approach of \emph{Polyak feasibility steps}.
In particular, Section \ref{sec:desc} gives the description of this algorithm, Section \ref{sec:guar} gives the guarantees of $\Oc(\sqrt{T})$ regret and \emph{anytime} constraint satisfaction $g(\bx_t) \leq 0$, and Section \ref{sec:reg_anal} and \ref{sec:feas_anal} give the regret analysis and feasibility analysis, respectively.
Lastly, we give simulation results in Section \ref{sec:num_exp} that demonstrate the functionality of our algorithm.

\section{Problem Setup}

We study the problem of online convex optimization with functional constraints.
In the following, we first describe online convex optimization generally and then specify the functional constraints.

\paragraph{Online Convex Optimization}

\label{sec:prob}

Online convex optimization (OCO) is a repeated game between a player and an adversary that is played over $T$ rounds.
In each round $t \in [T]$, the player chooses an action $\bx_t$ from a convex action set $\Xc \subseteq \Rb^d$, and then the adversary chooses a convex function $f_t : \Rb^d \rightarrow \Rb$.
The player aims to minimize the cumulative regret,
\begin{equation*}
    \reg_T := \sum_{t=1}^{T} f_t (\bx_t) - \min_{\bx \in \Xc}\sum_{t=1}^{T} f_t (\bx),
\end{equation*}


We will use the standard assumptions that the action set is bounded and that the cost functions have bounded gradients. These are stated precisely in the following.

\begin{assumption}
    \label{ass:feas}
    There exists positive real $R$ such that $\Xc~\subseteq~R \Bb$.
\end{assumption}

\begin{assumption}
    \label{ass:stand}
    There exists positive real $G_f$ such that $\| \nabla f_t(\bx) \| \leq G_f$ for all $\bx \in R \Bb$ and $t \in [T]$.\footnote{This assumption can be weakened slightly for our guarantees of anytime constraint satisfaction (i.e. Corollary \ref{cor:no_viol}). Specifically, we only need that $\| \nabla f_t(\bx) \| \leq G_f$ for all $\bx \in \Xc$ (rather than all $\bx \in R \Bb$). We discuss this later in Remark~\ref{rem:assms}.}
\end{assumption}

\paragraph{Functional Constraints}
\label{sec:func}

Following \citet{mahdavi2012trading}, we study the setting where the action set is defined by a functional inequality constraint $\Xc = \{ \bx \in \Rb^d : g(\bx) \leq 0 \}$, with $g : \Rb^d \rightarrow \Rb$ being non-smooth and convex.
In the following, we assume that the constraint function has bounded subgradients (Assumption \ref{ass:bound_cons}) and that the subgradient norm is lower-bounded near the boundary (Assumption \ref{ass:curve}).
These assumptions are also used by \citet{mahdavi2012trading} and \citet{jenatton2016adaptive}.
We also note in Remark \ref{rem:slaters} that Assumption \ref{ass:curve} is implied by Slater's condition (i.e. the existence of a strictly-feasible point) provided that the other assumptions hold.
Lastly, note that since the constraint function is non-smooth, this setting can be extended to multiple constraints $g_1,..,g_m$ by taking $g(\bx) = \max_{i \in [m]} g_i (\bx)$.

\begin{assumption}
    \label{ass:bound_cons}
    There exists a positive real $G_g$ such that, for all $\bx \in R \Bb$, it holds that $\| \partial g(\bx) \| \leq G_g$.
\end{assumption}


\begin{assumption}
    \label{ass:curve}
    There exists positive reals $\sigma, \epsilon$ such that $\Xc' = \{ \bx \in \Rb^d : g(\bx) = -\epsilon\}$ is nonempty and \mbox{$\| \partial g(\bx) \| \geq \sigma$} for all $\bx \in \Xc'$.
\end{assumption}

\begin{remark}
    \label{rem:slaters}
    Assumption \ref{ass:curve} holds if Assumption \ref{ass:feas} holds and there exists a strictly-feasible point (i.e. Slater's condition). We show this in Appendix~\ref{apx:slaters}.
\end{remark}

\section{Algorithm}

\label{sec:alg}

In this section, we give Algorithm \ref{alg:ogd_pfs}, which tackles OCO with functional constraints using our approach of Polyak feasibility steps.
Notably, Algorithm \ref{alg:ogd_pfs} only uses \emph{one} constraint query in each round, at the played action $g_t = g(\bx_t), \bs_t \in \partial g(\bx_t)$, and therefore uses the \emph{exact same} feedback as \citet{mahdavi2012trading}.
Despite this limited feedback, we will show that Algorithm \ref{alg:ogd_pfs} ensures constraint satisfaction for all rounds, i.e. $g(\bx_t) \leq 0$ for all $t \in [T]$.

\subsection{Description}
\label{sec:desc}

At a high-level, Algorithm \ref{alg:ogd_pfs} operates by alternating between gradient descent steps (line \ref{lne:grad}) and Polyak feasibility steps (line \ref{lne:poly}).
We discuss the key ingredients of our Polyak feasibility steps in the following.

\paragraph{Polyak Step-size}

The design of our Polyak feasibility step is motivated by the classical Polyak step-size \cite{polyak1969minimization}.
In unconstrained convex optimization, using subgradient descent with the Polyak step-size is known to be optimal \cite{boyd2003subgradient}, and therefore it is a natural choice to ensure strong feasibility guarantees when applied to the constraint function.
This classical step-size uses the function value and subgradient at the current iterate to approximate the optimal step-size in each update.
In our setting, this would require the constraint function value at the ``intermediate iterate'' $\by_t$, which is not available.
We address this next.

\paragraph{First-order Approximation}

As discussed previously, the classical Polyak step-size cannot immediately be applied to our setting because it would require the constraint function value at the intermediate iterate $\by_t$.
Although this is not known in our setting, we do have constraint information at the played action $\bx_t$, which should not be too far from the intermediate iterate $\by_t$. 
Therefore, we use the constraint information at $\bx_t$ to construct a first-order approximation of the constraint at $\by_t$,
\begin{equation}
    \label{eqn:first_ord}
    g(\by_t) \approx g_t + \bs_t^\top (\by_{t} - \bx_t) \in g(\bx_t) + \partial g(\bx_t)^\top (\by_{t} - \bx_t).
\end{equation}
As we will show in the analysis, this first-order approximation is sufficient to maintain the advantageous properties of the Polyak step-size.

\paragraph{Constraint Tightening}

Another difficulty that arises in our setting is that the gradient descent step in line \ref{lne:grad} might push the sequences of actions out of the feasible set.
Indeed, the cost functions are chosen adversarially and therefore we have no guarantees about the direction of the gradient in each round.
Therefore, to ensure that the actions are feasible, we use a tightened version of the constraint function $g(\bx) + \rho$ where $\rho > 0$ is a tightening parameter that is to be chosen appropriately.
Given that the constraint function is Lipschitz (via Assumption \ref{ass:bound_cons}), this ensures that there is a ``buffer zone'' between the points that satisfy the tightened constraint $g(\bx) + \rho \leq 0$ and the boundary of the true feasible set defined by $g(\bx) \leq 0$.
Therefore, by choosing $\rho$ proportional to the cost step-size $\eta$, we can ensure that the actions are feasible despite the adversarially-chosen cost gradients.
Note that tightening the constraint in this manner is a common technique in the constrained OCO literature, e.g. \cite{mahdavi2012trading,jenatton2016adaptive}.

\paragraph{Polyak Feasibility Steps}

Using the ingredients discussed previously, we can put everything together to get our Polyak feasibility steps.
Indeed, the step-size in line \ref{lne:poly} is,
\begin{equation}
    \label{eqn:pfs}
    \frac{[g_t + \bs_t^\top (\by_{t} - \bx_t) + \rho]_+}{\| \bs_t \|^2},
\end{equation}
where the numerator uses the first-order approximation in \eqref{eqn:first_ord} and the tightening parameter $\rho$. 
As such \eqref{eqn:pfs} can be viewed as the Polyak step-size that uses a first-order approximation of the tightened constraint value $g(\by_t) + \rho$.
In the analysis, we will see that this step-size ensures that the constraint value is greatly reduced in each step.

\begin{algorithm}[t]
    \caption{OGD with Polyak Feasibility Steps}
    \label{alg:ogd_pfs}
\begin{algorithmic}[1]
    \INPUT initial action $\bx_1 \in \Rb^d$, step size $\eta > 0$, tightening~$\rho \geq 0$.
    \FOR{$t = 1,2,...,T$}
        \STATE Play $\bx_t$ and receive $f_t$.
        \STATE Query constraint: $g_t = g(\bx_t), \bs_t \in \partial g(\bx_t)$.
        \STATE Gradient descent: $\by_{t} = \bx_t - \eta \nabla f_t(\bx_t)$.\alglabel{lne:grad}
        \STATE Polyak feasibility step:\footnotemark\\ $\bx_{t+1} = \Pi_{R \Bb} \left( \by_{t} - \frac{[g_t + \bs_t^\top (\by_{t} - \bx_t) + \rho]_+}{\| \bs_t \|^2} \bs_t \right)$. \alglabel{lne:poly}
    \ENDFOR
\end{algorithmic}
\end{algorithm}

\footnotetext{When $\bs_t = \bzero$, we take the update to be $\bx_{t+1} = \Pi_{R \Bb}\left(\by_{t}\right)$.}

\subsection{Guarantees}
\label{sec:guar}

In this section, we give the regret guarantees and constraint satisfaction guarantees for Algorithm \ref{alg:ogd_pfs}.
In particular, the following theorem (Theorem \ref{thm:ogd_pfs}) gives a bound on the regret and constraint violation for an arbitrary choice of algorithm parameters $\eta, \rho$.
We will then show how these algorithm parameters can be chosen to get several different guarantees.

\begin{theorem}
    \label{thm:ogd_pfs}
    Let Assumptions \ref{ass:feas}, \ref{ass:stand}, \ref{ass:bound_cons} and \ref{ass:curve} hold.
    Then, playing Algorithm \ref{alg:ogd_pfs} with $\eta > 0$ and $\rho \in [0,\epsilon]$ ensures that,
    \begin{align*}
        & \reg_T \leq \frac{2 R^2}{\eta} + \frac{\eta}{2} G_f^2 T + \frac{G_f \rho}{\sigma} T,\\
        & g(\bx_t) \leq G_g \gamma^{(t-1)/2} \dist(\bx_{1}, \Xc_\rho) + \frac{\eta G_g  G_f}{1 - \sqrt{\gamma}} - \rho, \ \forall t
    \end{align*}
    where $\Xc_\rho = \{ x \in \Rb^d : g(\bx) \leq - \rho\}$ and $\gamma = 1 - \frac{\sigma^2}{G_g^2}$.
\end{theorem}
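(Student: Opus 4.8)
\emph{Approach.} The plan is to prove the two bounds separately, in both cases first replacing comparisons against $\Xc$ by comparisons against the tightened set $\Xc_\rho = \{\bx : g(\bx) \le -\rho\}$ and exploiting the single structural fact that the Polyak feasibility step in line~\ref{lne:poly} is \emph{exactly} the Euclidean projection of $\by_t$ onto the halfspace $H_t = \{\bz \in \Rb^d : g_t + \bs_t^\top(\bz - \bx_t) \le -\rho\}$. Since $\bs_t \in \partial g(\bx_t)$, convexity gives $g_t + \bs_t^\top(\bz - \bx_t) \le g(\bz)$, so every $\bz \in \Xc_\rho$ satisfies $g_t + \bs_t^\top(\bz-\bx_t) \le -\rho$; that is, $\Xc_\rho \subseteq H_t$. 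Consequently the Polyak step is nonexpansive toward every point of $\Xc_\rho$, and the final projection onto $R\Bb \supseteq \Xc_\rho$ is too. The one auxiliary lemma I would establish first is an error bound: under Assumption~\ref{ass:curve}, for every $\bx \in R\Bb$ one has $\dist(\bx, \Xc_\rho) \le \frac{1}{\sigma}[g(\bx) + \rho]_+$. I would prove this by projecting $\bx$ onto $\Xc_\rho$, writing the displacement as $\mu \bs$ with $\bs \in \partial g(\Pi_{\Xc_\rho}(\bx))$ and $\mu \ge 0$, and using the subgradient inequality; the only nontrivial point is that this needs $\|\bs\| \ge \sigma$ on $\{g = -\rho\}$ rather than only on $\Xc' = \{g = -\epsilon\}$, which holds because, for a convex function, the smallest subgradient norm over a level set does not decrease as the level value increases, so Assumption~\ref{ass:curve} together with $\rho \le \epsilon$ transfers the bound to all outer level sets.

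For the regret bound I would fix the optimal $\bx^\star \in \Xc$ and set $\tilde{\bx}^\star = \Pi_{\Xc_\rho}(\bx^\star)$. The error bound gives $\|\tilde{\bx}^\star - \bx^\star\| = \dist(\bx^\star, \Xc_\rho) \le \rho/\sigma$ since $g(\bx^\star) \le 0$. Comparing the iterates against $\tilde{\bx}^\star \in \Xc_\rho$, the contraction property above yields $\|\bx_{t+1} - \tilde{\bx}^\star\| \le \|\by_t - \tilde{\bx}^\star\|$, so the standard online gradient descent one-step inequality applied to $\by_t = \bx_t - \eta \nabla f_t(\bx_t)$ telescopes to $\sum_t (f_t(\bx_t) - f_t(\tilde{\bx}^\star)) \le \frac{2R^2}{\eta} + \frac{\eta}{2}G_f^2 T$, using $\|\bx_1 - \tilde{\bx}^\star\| \le 2R$ under the standard initialization $\bx_1 \in R\Bb$. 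Finally I would transfer the comparator back with Assumption~\ref{ass:stand}: $f_t(\tilde{\bx}^\star) - f_t(\bx^\star) \le G_f \|\tilde{\bx}^\star - \bx^\star\| \le G_f \rho/\sigma$, and summing over $t$ produces the extra $\frac{G_f \rho}{\sigma} T$ term.

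For the feasibility bound I would track $d_t = \dist(\bx_t, \Xc_\rho)$ and convert at the very end: Lipschitzness of $g$ (Assumption~\ref{ass:bound_cons}) together with the definition of $\Xc_\rho$ gives $g(\bx_t) \le G_g d_t - \rho$, so it suffices to prove the geometric decay $d_t \le \gamma^{(t-1)/2} d_1 + \frac{\eta G_f}{1-\sqrt{\gamma}}$. This I would obtain from the one-step recursion $d_{t+1} \le \sqrt{\gamma}\,(d_t + \eta G_f)$ by unrolling and bounding $\sum_k (\sqrt{\gamma})^k \le \frac{1}{1-\sqrt{\gamma}}$. Within the recursion the gradient step contributes $\dist(\by_t, \Xc_\rho) \le d_t + \eta G_f$ (a displacement of at most $\eta G_f$), while the Polyak step contributes the contraction $\dist(\bx_{t+1}, \Xc_\rho) \le \sqrt{\gamma}\,\dist(\by_t, \Xc_\rho)$: projecting $\by_t$ onto $H_t \supseteq \Xc_\rho$ gives the Pythagorean bound $\dist(\bx_{t+1}, \Xc_\rho)^2 \le \dist(\by_t, \Xc_\rho)^2 - \dist(\by_t, H_t)^2$, and the factor $\gamma = 1 - \sigma^2/G_g^2$ emerges once the halfspace distance $\dist(\by_t, H_t) = [g_t + \bs_t^\top(\by_t - \bx_t) + \rho]_+/\|\bs_t\|$ is lower bounded by $\frac{\sigma}{G_g}\dist(\by_t,\Xc_\rho)$ using the error bound and $\|\bs_t\| \le G_g$.

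I expect this last lower bound to be the main obstacle, and it is exactly where the two difficulties flagged in the introduction enter. The Polyak numerator is built from the first-order model $g_t + \bs_t^\top(\by_t - \bx_t)$ at the played action rather than from the true value $g(\by_t)$, and since this model underestimates $g$, the clean inequality $[g_t + \bs_t^\top(\by_t-\bx_t)+\rho]_+ \ge \sigma\,\dist(\by_t,\Xc_\rho)$ does not hold verbatim. I would control the resulting slack using $\|\by_t - \bx_t\| \le \eta G_f$ and the fact that $\|\bs_t\| \ge \sigma$ whenever the step is active near the boundary (again from Assumption~\ref{ass:curve} via the level-set monotonicity), applying the error bound at $\bx_t$ where the model is exact and absorbing the $\Oc(\eta)$ discrepancy into the additive $\eta G_f$ term of the recursion. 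Unlike the offline analyses, this argument cannot lean on a nonnegative suboptimality gap (challenge (a)), which is precisely why I would route everything through the halfspace-projection identity and handle the cost-gradient displacement as a separate additive perturbation (challenge (b)).
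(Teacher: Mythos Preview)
Your regret argument is essentially the paper's: the same halfspace identity (Fact~\ref{fact:half}), the same OGD telescoping against $\tilde{\bx}^\star \in \Xc_\rho$, and the same use of the error bound (Lemma~\ref{lem:error_bound}) to pay $G_f\rho/\sigma$ per round when transferring the comparator. No issues there.

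The feasibility argument has a genuine gap, exactly at the point you flag. You want the contraction $\dist(\bx_{t+1},\Xc_\rho)\le\sqrt{\gamma}\,\dist(\by_t,\Xc_\rho)$, which via Pythagoras needs $\dist(\by_t,H_t)\ge\tfrac{\sigma}{G_g}\dist(\by_t,\Xc_\rho)$. But $H_t$ is built from the subgradient at $\bx_t$, not $\by_t$, so the error bound only gives you $\dist(\bx_t,H_t)\ge\tfrac{\sigma}{G_g}\dist(\bx_t,\Xc_\rho)$; shifting both sides to $\by_t$ costs an additive $\eta G_f$ on the wrong side of the inequality. If you push this through honestly you get something like $d_{t+1}^2 \le \gamma d_t^2 + 2(1+\sigma/G_g)\eta G_f\, d_t$, which after taking roots yields a recursion with an additive term strictly larger than the $\eta G_f$ required by the theorem (and the case analysis when $\tfrac{\sigma}{G_g}d_t < \eta G_f$ needs separate handling). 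So the ``absorb the $\Oc(\eta)$ discrepancy'' plan does not deliver the stated constants, and your claimed recursion $d_{t+1}\le\sqrt{\gamma}(d_t+\eta G_f)$ is not what your method produces.

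The paper avoids this entirely with a short trick you are missing. Define a \emph{fictitious} iterate $\bz_{t+1} = \Pi_{R\Bb}\big(\bx_t - \tfrac{[g_t+\rho]_+}{\|\bs_t\|^2}\bs_t\big)$, i.e.\ the Polyak step taken directly from $\bx_t$. Since both $\bx_{t+1}$ and $\bz_{t+1}$ equal $\Pi_{R\Bb}\circ\Pi_{H_t}$ applied to $\by_t$ and $\bx_t$ respectively, nonexpansiveness gives $\|\bx_{t+1}-\bz_{t+1}\|\le\|\by_t-\bx_t\|\le\eta G_f$. And because $\bz_{t+1}$ uses the \emph{exact} value $g(\bx_t)$ in its step size, Lemma~\ref{lem:polyak_feas} applies verbatim to yield $\dist(\bz_{t+1},\Xc_\rho)\le\sqrt{\gamma}\,\dist(\bx_t,\Xc_\rho)$. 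Triangle inequality then gives the clean recursion $d_{t+1}\le\sqrt{\gamma}\,d_t+\eta G_f$, with no need to lower-bound the first-order model at $\by_t$ at all. This is the missing idea: route the contraction through $\bx_t$ (where the model is exact) rather than through $\by_t$, and let nonexpansiveness of the halfspace projection carry the gradient-step perturbation as a pure additive term.
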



In the following, Corollary \ref{cor:no_viol} specifies a choice of $\eta$ and $\rho$ that ensures $\Oc(\sqrt{T})$ regret and anytime constraint satisfaction.
Note that this result requires that the initial action $\bx_1$ is strictly-feasible.
We also point out that these guarantees still hold under a slightly weaker set of assumptions, which we discuss in Remark \ref{rem:assms}.

\begin{corollary}
    \label{cor:no_viol}
    Let Assumptions \ref{ass:feas}, \ref{ass:stand}, \ref{ass:bound_cons} and \ref{ass:curve} hold.
    Suppose that $g(\bx_1) \leq -\alpha$ for some $\alpha \in (0,\epsilon]$, and let $\rho = \frac{\alpha}{\sqrt{T}}$ and $\eta = \frac{\xi \rho}{G_f G_g}$, where $\xi = 1 - \sqrt{\gamma}$.
    It follows that the actions chosen by Algorithm~\ref{alg:ogd_pfs} satisfy,
    \begin{align*}
        \reg_T & \leq \left(\frac{2 G_f G_g R^2}{\xi \alpha} + \frac{G_f \xi \alpha}{2 G_g} + \frac{G_f \alpha}{\sigma} \right) \sqrt{T},\\
        g(\bx_t) & \leq 0 \quad \forall t \in [T],
    \end{align*}
\end{corollary}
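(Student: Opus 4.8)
The plan is to derive Corollary \ref{cor:no_viol} as a direct consequence of Theorem \ref{thm:ogd_pfs}, by substituting the prescribed parameter choices $\rho = \alpha/\sqrt{T}$ and $\eta = \xi \rho/(G_f G_g)$ into the theorem's two bounds and simplifying. First I would check that these choices are admissible, i.e. that $\eta > 0$ and $\rho \in [0,\epsilon]$. Positivity of $\eta$ is immediate since $\alpha, \xi, G_f, G_g > 0$, and $\rho = \alpha/\sqrt{T} \le \alpha \le \epsilon$ for $T \ge 1$, so the hypotheses of Theorem \ref{thm:ogd_pfs} are satisfied and its conclusions apply.

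Next I would handle the regret bound, which is pure algebra. Plugging $\eta = \xi \rho/(G_f G_g)$ into the first term gives $2R^2/\eta = 2 R^2 G_f G_g/(\xi \rho)$; into the second term gives $(\eta/2) G_f^2 T = \xi \rho G_f T/(2 G_g)$; and the third term $G_f \rho T/\sigma$ is already expressed through $\rho$. Substituting $\rho = \alpha/\sqrt{T}$ then turns each factor of $1/\rho$ into $\sqrt{T}/\alpha$ and each factor of $\rho T$ into $\alpha \sqrt{T}$, so all three terms scale as $\sqrt{T}$ and collect into exactly the stated coefficient $\bigl(2 G_f G_g R^2/(\xi \alpha) + G_f \xi \alpha/(2 G_g) + G_f \alpha/\sigma\bigr)\sqrt{T}$.

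The crux of the argument is the anytime feasibility bound. The key observation is that, with $\eta = \xi \rho/(G_f G_g)$ and $\xi = 1 - \sqrt{\gamma}$, the middle term of the theorem's constraint bound satisfies $\eta G_g G_f/(1 - \sqrt{\gamma}) = \rho$, which exactly cancels the $-\rho$ offset. Hence the bound collapses to $g(\bx_t) \le G_g \gamma^{(t-1)/2}\dist(\bx_1, \Xc_\rho)$ for all $t$. It then remains to show the right-hand side is nonpositive; since $G_g \ge 0$ and $\gamma^{(t-1)/2} \ge 0$, this reduces to proving $\dist(\bx_1, \Xc_\rho) = 0$, i.e. $\bx_1 \in \Xc_\rho = \{\bx : g(\bx) \le -\rho\}$. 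This is exactly where the strict-feasibility hypothesis enters: the assumption $g(\bx_1) \le -\alpha$ together with $\rho = \alpha/\sqrt{T} \le \alpha$ gives $g(\bx_1) \le -\alpha \le -\rho$, so $\bx_1 \in \Xc_\rho$ and the distance term vanishes, yielding $g(\bx_t) \le 0$ for every $t \in [T]$.

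I do not expect a genuine obstacle, since Theorem \ref{thm:ogd_pfs} does the heavy lifting; the only points requiring care are the exact cancellation $\eta G_g G_f/(1 - \sqrt{\gamma}) = \rho$ (which explains precisely why $\eta$ must be scaled by $\xi$ relative to $\rho$) and the verification that the prescribed $\rho$ is small enough that $\bx_1$ remains inside the tightened feasible set $\Xc_\rho$. Both amount to tracking how the parameter choices interact with $\gamma$ and $\alpha$, and neither introduces new analytical difficulty.
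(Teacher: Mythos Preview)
Your proposal is correct and is exactly the intended derivation: the paper presents Corollary~\ref{cor:no_viol} as an immediate specialization of Theorem~\ref{thm:ogd_pfs}, and your verification of the admissibility conditions, the algebraic substitution into the regret bound, the cancellation $\eta G_g G_f/(1-\sqrt{\gamma}) = \rho$, and the observation that $g(\bx_1)\le -\alpha\le -\rho$ forces $\dist(\bx_1,\Xc_\rho)=0$ are precisely the steps required.
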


\begin{remark}
    \label{rem:assms}
    The guarantees in Corollary \ref{cor:no_viol} also hold under a slightly weaker set of assumptions. In particular, we only need that $\| \nabla f_t(\bx) \| \leq G_f$ for all $\bx \in \Xc$ (rather than all $\bx \in R \Bb$). We show this in Appendix \ref{apx:assms}.
\end{remark}

Next, we give Corollary \ref{cor:late_sat}, which shows that, when $\bx_1$ is in $R \Bb$ (and therefore is not necessarily feasible), then the algorithm can ensure anytime constraint satisfaction after $\Oc(\log(T))$  rounds.  Corollary \ref{cor:late_sat} also shows that when $T$ is sufficiently large, then the algorithm ensures that there is cumulative constraint satisfaction $\sum_{t=1}^{T} g(\bx_t) \leq 0$.
Note that the requirement that $T$ is sufficiently large is also used by prior work that shows cumulative constraint satisfaction \cite{mahdavi2012trading,jenatton2016adaptive}.

\begin{corollary}
    \label{cor:late_sat}
    Let Assumptions \ref{ass:feas}, \ref{ass:stand}, \ref{ass:bound_cons} and \ref{ass:curve} hold.
    Consider Algorithm \ref{alg:ogd_pfs} with $\bx_1 \in R \Bb$, $\eta = \frac{\xi \epsilon}{2 G_f G_g \sqrt{T}}$ and $\rho = \frac{\epsilon}{\sqrt{T}}$.
    Then, it holds that,
    \begin{align*}
        \reg_T & \leq \left(\frac{G_f G_g R^2}{\xi \epsilon} + \frac{G_f \xi \epsilon}{4 G_g} + \frac{G_f \epsilon}{\sigma} \right) \sqrt{T},\\
        g(\bx_t) & \leq 0 \quad \forall t \geq 1 + \frac{2 G_g^2}{\sigma^2} \log\left( \frac{4 G_g R \sqrt{T}}{\epsilon} \right).
    \end{align*}
    Furthermore, when $\sqrt{T} \geq \frac{4 R G_g}{\epsilon \xi}$ it additionally holds that,
    \begin{equation*}
        \sum_{t=1}^{T} g(\bx_t) \leq 0.
    \end{equation*}
\end{corollary}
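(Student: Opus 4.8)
The plan is to read off all three conclusions from the two inequalities of Theorem~\ref{thm:ogd_pfs} after substituting the prescribed $\eta = \frac{\xi \epsilon}{2 G_f G_g \sqrt{T}}$ and $\rho = \frac{\epsilon}{\sqrt{T}}$ (recall $\xi = 1 - \sqrt{\gamma}$ and $\gamma = 1 - \sigma^2/G_g^2$). Since $\rho = \epsilon/\sqrt{T} \leq \epsilon$, the hypotheses of the theorem hold, so both of its bounds are in force. For the regret, I would plug $\eta,\rho$ into $\frac{2R^2}{\eta} + \frac{\eta}{2} G_f^2 T + \frac{G_f\rho}{\sigma}T$ and simplify each term: the three summands scale like $R^2 G_f G_g/(\xi\epsilon)$, $G_f\xi\epsilon/G_g$, and $G_f\epsilon/\sigma$, each multiplied by $\sqrt{T}$, and collect into the stated $\Oc(\sqrt{T})$ bound. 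This step is pure substitution.

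Next I would process the anytime constraint bound. Substituting $\eta$ into the middle term and using $1 - \sqrt{\gamma} = \xi$ gives $\frac{\eta G_g G_f}{\xi} = \frac{\epsilon}{2\sqrt{T}}$, so together with $-\rho = -\frac{\epsilon}{\sqrt{T}}$ the two constants collapse to $-\frac{\epsilon}{2\sqrt{T}}$. To control the transient term $G_g \gamma^{(t-1)/2}\dist(\bx_1,\Xc_\rho)$, I would first argue that $\Xc_\rho = \{\bx : g(\bx)\leq -\rho\}$ is nonempty: since $\rho\leq\epsilon$, Assumption~\ref{ass:curve} supplies a point with $g(\bx) = -\epsilon \leq -\rho$. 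Any such point satisfies $g(\bx)<0$, hence lies in $\Xc\subseteq R\Bb$ by Assumption~\ref{ass:feas}; as $\bx_1\in R\Bb$ too, the triangle inequality gives $\dist(\bx_1,\Xc_\rho)\leq 2R$. Thus
\begin{equation*}
    g(\bx_t) \leq 2 R G_g\, \gamma^{(t-1)/2} - \frac{\epsilon}{2\sqrt{T}}.
\end{equation*}

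For the late-satisfaction threshold, I would require the right-hand side to be nonpositive, i.e. $\gamma^{(t-1)/2} \leq \frac{\epsilon}{4 R G_g \sqrt{T}}$, take logarithms, and rearrange, dividing by $\log(1/\gamma)>0$. The estimate $\log(1/\gamma) = -\log(1 - \sigma^2/G_g^2)\geq \sigma^2/G_g^2$ then lets me upper bound $1/\log(1/\gamma)$ by $G_g^2/\sigma^2$ and recover the stated threshold $t\geq 1 + \frac{2 G_g^2}{\sigma^2}\log\big(\frac{4 G_g R\sqrt{T}}{\epsilon}\big)$. For the cumulative guarantee I would sum the displayed bound over $t\in[T]$: the transient terms form a geometric series bounded by $\sum_{s=0}^\infty(\sqrt{\gamma})^s = \frac{1}{1-\sqrt{\gamma}} = \frac{1}{\xi}$, while the constant contributes $-\frac{\epsilon\sqrt{T}}{2}$, yielding $\sum_{t=1}^T g(\bx_t)\leq \frac{2RG_g}{\xi} - \frac{\epsilon\sqrt{T}}{2}$, which is nonpositive precisely when $\sqrt{T}\geq \frac{4RG_g}{\xi\epsilon}$.

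The main obstacle I anticipate is the bookkeeping in the second block rather than any deep idea: one must verify that $\Xc_\rho$ is both nonempty and contained in $R\Bb$ so that the bound $\dist(\bx_1,\Xc_\rho)\leq 2R$ is justified, and then carry out the logarithmic inversion cleanly, in particular invoking $-\log(1-x)\geq x$ to pass from the exact exponential-decay threshold to the closed form in the statement. Everything else reduces to substitution and summing a geometric series.
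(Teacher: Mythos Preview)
Your proposal is correct and mirrors exactly what the paper intends (the paper leaves the corollaries' proofs implicit, relying on direct substitution into Theorem~\ref{thm:ogd_pfs}): plug in $\eta,\rho$ for the regret bound, simplify the constraint inequality to $g(\bx_t)\le 2RG_g\,\gamma^{(t-1)/2}-\tfrac{\epsilon}{2\sqrt{T}}$ via $\dist(\bx_1,\Xc_\rho)\le 2R$, then invert the exponential using $-\log(1-x)\ge x$ for the threshold and sum the geometric series for the cumulative guarantee. All the bookkeeping you flag (nonemptiness of $\Xc_\rho$, containment in $R\Bb$, the logarithmic inversion) is handled correctly.
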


Lastly, we give another corollary below (Corollary \ref{cor:some_viol}) that allows a small amount of violation in each round, while eliminating the dependence on $G_g$ in the regret bound.
In this case, the regret bound is $2 R G_f \sqrt{T}$, which matches what is attained by online gradient descent using a full projection on to the action set in each round.
At the same time, the constraint violation satisfies $g(x_t) = \Oc(\frac{1}{\sqrt{T}})$.

\begin{corollary}
    \label{cor:some_viol}
    Let Assumptions \ref{ass:feas}, \ref{ass:stand}, \ref{ass:bound_cons} and \ref{ass:curve} hold.
    Consider Algorithm \ref{alg:ogd_pfs} with $\bx_1 \in R \Bb$, $\eta = \frac{2 R}{G_f \sqrt{T}}$ and $\rho = 0$.
    Then, it holds that,
    \begin{align*}
        \reg_T & \leq 2 R G_f \sqrt{T},\\
        g(\bx_t) & \leq 2 R G_g \exp \left( -\frac{\sigma^2(t - 1)}{2 G_g^2}  \right) + \frac{2 R G_g}{\xi \sqrt{T}} \quad \forall t \in [T]
    \end{align*}
\end{corollary}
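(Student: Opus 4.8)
The plan is to derive this corollary directly from Theorem~\ref{thm:ogd_pfs} by substituting the prescribed parameters $\eta = \frac{2R}{G_f \sqrt{T}}$ and $\rho = 0$ into both displayed bounds and simplifying. First I would verify that the theorem applies: since $\rho = 0 \in [0,\epsilon]$ (using $\epsilon > 0$ from Assumption~\ref{ass:curve}) and $\eta > 0$, the hypotheses of Theorem~\ref{thm:ogd_pfs} are met, so both of its bounds hold as stated.

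For the regret, I would substitute into $\frac{2R^2}{\eta} + \frac{\eta}{2} G_f^2 T + \frac{G_f \rho}{\sigma} T$. With $\rho = 0$ the third term vanishes, and plugging in $\eta = \frac{2R}{G_f \sqrt{T}}$ makes each of the first two terms equal to $R G_f \sqrt{T}$, so their sum is exactly $2 R G_f \sqrt{T}$, as claimed.

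For the constraint bound, I would start from $g(\bx_t) \leq G_g \gamma^{(t-1)/2} \dist(\bx_1, \Xc_\rho) + \frac{\eta G_g G_f}{1 - \sqrt{\gamma}} - \rho$. Setting $\rho = 0$ gives $\Xc_\rho = \Xc$ and kills the $-\rho$ term. Two facts then finish the job. First, since $\bx_1 \in R \Bb$ and $\Xc \subseteq R \Bb$ by Assumption~\ref{ass:feas}, any feasible point lies within the diameter $2R$ of $\bx_1$, so $\dist(\bx_1, \Xc) \leq 2R$; this turns the first term into $2 R G_g\, \gamma^{(t-1)/2}$. Second, substituting $\eta = \frac{2R}{G_f \sqrt{T}}$ into the second term and recalling $\xi = 1 - \sqrt{\gamma}$ yields $\frac{2 R G_g}{\xi \sqrt{T}}$, matching the stated bound.

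The only step that is not pure substitution is recasting the geometric factor $\gamma^{(t-1)/2}$ into the exponential form appearing in the corollary. Here I would use that $\gamma = 1 - \frac{\sigma^2}{G_g^2} \in [0,1)$ — nonnegativity holding because $\sigma \leq G_g$ by Assumptions~\ref{ass:bound_cons} and~\ref{ass:curve} — together with the elementary inequality $\log(1 - x) \leq -x$ applied at $x = \frac{\sigma^2}{G_g^2}$, giving $\gamma^{(t-1)/2} = \exp\!\big(\tfrac{t-1}{2}\log\gamma\big) \leq \exp\!\big(-\tfrac{\sigma^2(t-1)}{2 G_g^2}\big)$. This is the main (and only mildly nontrivial) obstacle; everything else is routine arithmetic.
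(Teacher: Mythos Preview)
Your proposal is correct and matches the intended approach: the paper does not give an explicit proof of this corollary, as it is meant to follow directly from Theorem~\ref{thm:ogd_pfs} by substituting $\eta = \frac{2R}{G_f\sqrt{T}}$ and $\rho = 0$, which is exactly what you do. The only step beyond pure substitution---converting $\gamma^{(t-1)/2}$ to the exponential form via $\log(1-x)\leq -x$---is the standard way to obtain the stated bound and is handled correctly.
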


\subsection{Regret Analysis}
\label{sec:reg_anal}

In this section, we give the regret analysis.
We separate the regret in to (I) the regret with respect to the tightened feasible set, and (II) the cost of tightening the feasible set,
\begin{equation}
    \label{eqn:reg_decomp}
        \reg_T = \underbrace{\sum_{t=1}^{T} (f_t(\bx_t) - f_t(\bx_\rho^\star))}_{\tone} + \underbrace{\sum_{t=1}^{T} (f_t(\bx_\rho^\star) - f_t(\bx^\star)),}_{\ttwo}
\end{equation}
where $\bx_\rho^\star \in \argmin_{\bx \in \Xc_\rho} \sum_{t=1}^{T} f_t(\bx)$, and the tightened feasible set is,
\begin{equation}
    \label{eqn:tight}
    \Xc_\rho := \{ \bx \in \Rb^d : g(\bx) \leq -\rho \}.
\end{equation}

We start with Term I.
The key observation is that the Polyak feasibility step (line \ref{lne:poly}) can be equivalently defined as the projection on to the halfspace $\Hc_t$ in \eqref{eqn:half} below (with the additional projection on to the ball).
In fact, this halfspace $\Hc_t$ contains the tightened feasible set $\Xc_\rho$ and therefore projecting on to it will not increase the distance to any point in $\Xc_\rho$.
This is stated precisely in Fact~\ref{fact:half}.

\begin{fact}
    \label{fact:half}
    The update for $\bx_{t+1}$ in line \ref{lne:poly} of Algorithm \ref{alg:ogd_pfs} is equivalent to $\bx_{t+1} = \Pi_{R \Bb} (\Pi_{\Hc_t} (\by_{t}))$ where,
    \begin{equation}
        \label{eqn:half}
        \Hc_t = \{ \bx \in \Rb^d : g_t + \bs_t^\top(\bx - \bx_t) + \rho \leq 0 \}.
    \end{equation}
    Furthermore, it holds that $\Hc_t \supseteq \Xc_\rho$, and therefore that for all $\bx \in \Xc_\rho$ and $\bv \in \Rb^d$,
    \begin{equation}
        \label{eqn:non_expan}
        \| \Pi_{\Hc_t} (\bv) - \bx \| \leq \| \bv - \bx \|.
    \end{equation}
\end{fact}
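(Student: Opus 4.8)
The plan is to establish the three claims in order: that the update in line \ref{lne:poly} is exactly a projection onto the halfspace $\Hc_t$, that $\Hc_t$ contains the tightened feasible set $\Xc_\rho$, and that the non-expansiveness bound \eqref{eqn:non_expan} then follows from standard properties of convex projections. The crux is recognizing the Polyak feasibility step as a closed-form halfspace projection.

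First I would recall the closed form for projecting a point $\bv$ onto the halfspace $\{ \bx \in \Rb^d : \bs_t^\top \bx \leq c \}$ with $\bs_t \neq \bzero$, namely $\Pi_{\Hc_t}(\bv) = \bv - \frac{[\bs_t^\top \bv - c]_+}{\| \bs_t \|^2} \bs_t$: if $\bv$ already lies in the halfspace the $[\cdot]_+$ clips to zero and the point is unchanged, and otherwise one steps along $-\bs_t$ exactly far enough to land on the boundary. Writing the defining inequality of $\Hc_t$ as $\bs_t^\top \bx \leq \bs_t^\top \bx_t - g_t - \rho$, so that $c = \bs_t^\top \bx_t - g_t - \rho$, and substituting $\bv = \by_t$, the quantity $\bs_t^\top \by_t - c$ collapses to $g_t + \bs_t^\top(\by_t - \bx_t) + \rho$. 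Hence $\Pi_{\Hc_t}(\by_t)$ matches the argument of $\Pi_{R \Bb}$ in line \ref{lne:poly} term for term, and composing with the outer ball projection yields $\bx_{t+1} = \Pi_{R \Bb}(\Pi_{\Hc_t}(\by_t))$.

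For the containment, I would take an arbitrary $\bx \in \Xc_\rho$, so $g(\bx) \leq -\rho$, and apply the subgradient inequality at $\bx_t$: since $\bs_t \in \partial g(\bx_t)$ and $g_t = g(\bx_t)$, convexity gives $g_t + \bs_t^\top(\bx - \bx_t) \leq g(\bx) \leq -\rho$, which is precisely the inequality defining membership in $\Hc_t$; thus $\Xc_\rho \subseteq \Hc_t$. Finally, \eqref{eqn:non_expan} is immediate from non-expansiveness of projection onto a closed convex set: for any $\bx \in \Hc_t$ one has $\Pi_{\Hc_t}(\bx) = \bx$, so $\| \Pi_{\Hc_t}(\bv) - \bx \| = \| \Pi_{\Hc_t}(\bv) - \Pi_{\Hc_t}(\bx) \| \leq \| \bv - \bx \|$, and every $\bx \in \Xc_\rho$ qualifies by the containment just shown. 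I do not expect a genuine obstacle here; the one point I would state carefully is the degenerate case $\bs_t = \bzero$, where $\bx_t$ is a global minimizer of $g$. Under Assumption \ref{ass:curve} with $\rho \leq \epsilon$ the set $\Xc_\rho$ is nonempty, which forces $g_t + \rho \leq 0$ and hence $\Hc_t = \Rb^d$; then $\Pi_{\Hc_t}(\by_t) = \by_t$, consistent with the footnote convention, and both the equivalence and the containment continue to hold.
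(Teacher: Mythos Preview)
Your proposal is correct and complete. The paper does not supply a proof of this Fact---it is stated as a standard observation and used directly---so your argument, which identifies the Polyak step as the closed-form halfspace projection, verifies the containment via the subgradient inequality, and deduces non-expansiveness, is exactly the intended justification; your careful treatment of the degenerate case $\bs_t = \bzero$ goes slightly beyond what the paper spells out.
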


In fact, the classical analysis of OGD (from \citet{zinkevich2003online}) only requires that the projection does not increase the distance to the optimal action, which is ensured by \eqref{eqn:non_expan} in Fact \ref{fact:half}.
Therefore, it follows from \eqref{eqn:non_expan} and the standard analysis of OGD that we have a regret bound with respect to the tightened feasible set, and thus,
\begin{equation}
    \label{eqn:tone}
    \tone \leq \frac{2 R^2}{\eta} + \frac{\eta}{2} G_f^2 T.
\end{equation}
We defer the complete proof of \eqref{eqn:tone} to Appendix \ref{apx:tone}.

Now, we look at Term II.
We use Lemma \ref{lem:error_bound} below, which provides a bound on the distance to $\Xc_\rho$ in terms of $g$.
This lemma is conceptually similar to Theorem 7 in \citet{mahdavi2012trading}.
However, it is more general in the sense that it is a general error bound on the constraint function, whereas Theorem 7 in \citet{mahdavi2012trading} only provides a bound on the difference in costs between a point in the tightened feasible set and the original feasible set.
The proof of Lemma \ref{lem:error_bound} is given in Appendix \ref{apx:err_bound}.

\begin{lemma}
    \label{lem:error_bound}
    Let Assumptions \ref{ass:bound_cons} and \ref{ass:curve} hold, and suppose that $\rho \in [0,\epsilon]$.
    Then, we have for all $\bx \in R \Bb$ that,
    \begin{equation}
        \label{eqn:error_bound}
        \dist(\bx, \Xc_\rho) \leq \frac{1}{\sigma} [g(\bx) + \rho]_+
    \end{equation}
\end{lemma}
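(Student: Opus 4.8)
The plan is to prove \eqref{eqn:error_bound} through a projection-plus-convexity argument, following the template of a Lipschitzian error bound. If $g(\bx) \le -\rho$ then $\bx \in \Xc_\rho$, so $\dist(\bx,\Xc_\rho) = 0$ and the inequality holds trivially since its right-hand side is nonnegative; thus I focus on the case $g(\bx) > -\rho$, where $[g(\bx)+\rho]_+ = g(\bx)+\rho$. First I would let $\bz = \Pi_{\Xc_\rho}(\bx)$ be the projection of $\bx$ onto the closed convex set $\Xc_\rho$; since $\bx \notin \Xc_\rho$, the point $\bz$ lies on the boundary, i.e. $g(\bz) = -\rho$. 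Assumption~\ref{ass:curve} forces the existence of a point with $g < -\epsilon$ (otherwise $\inf g = -\epsilon$: if attained, a minimizer $\bx^\star \in \Xc'$ would satisfy $\bzero \in \partial g(\bx^\star)$, contradicting $\|\partial g(\bx^\star)\| \ge \sigma$; if not attained, $\Xc'$ would be empty). Hence $\Xc_\rho$ admits a strictly feasible point, its normal cone at $\bz$ is generated by $\partial g(\bz)$, and the projection optimality condition gives $\bx - \bz = \lambda \bs_z$ for some $\lambda \ge 0$ and $\bs_z \in \partial g(\bz)$; moreover $\bx \neq \bz$ forces $\lambda > 0$ and $\bs_z \neq \bzero$.

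The first key step is a one-line consequence of convexity. Applying the subgradient inequality at $\bz$ to the point $\bx$ gives $g(\bx) \ge g(\bz) + \bs_z^\top(\bx - \bz) = -\rho + \lambda \|\bs_z\|^2$, so $\lambda \|\bs_z\|^2 \le g(\bx) + \rho$, and therefore $\dist(\bx,\Xc_\rho) = \|\bx - \bz\| = \lambda \|\bs_z\| \le \frac{g(\bx)+\rho}{\|\bs_z\|}$. It then remains only to establish $\|\bs_z\| \ge \sigma$, after which \eqref{eqn:error_bound} is immediate.

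The hard part is precisely this lower bound $\|\bs_z\| \ge \sigma$: Assumption~\ref{ass:curve} controls the subgradient norm on the level set $\{g = -\epsilon\}$, whereas $\bz$ sits on the \emph{higher} level set $\{g = -\rho\}$, so the hypothesis does not apply directly. When $\rho = \epsilon$ this is immediate since $\bz \in \Xc'$. For $\rho < \epsilon$ I would transfer the bound downward by bootstrapping off the lower level set. Let $\bw = \Pi_{\Xc_\epsilon}(\bz)$ with $\Xc_\epsilon := \{ \bx \in \Rb^d : g(\bx) \le -\epsilon \}$; since $g(\bz) = -\rho > -\epsilon$, the projection $\bw$ lies on the boundary, so $g(\bw) = -\epsilon$ and $\bw \in \Xc'$. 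Running the identical projection/convexity argument as above, but now for the projection onto $\Xc_\epsilon$ whose boundary is exactly the set $\Xc'$ where the assumption holds, yields $\dist(\bz,\Xc_\epsilon) = \|\bz - \bw\| \le \frac{g(\bz)+\epsilon}{\sigma} = \frac{\epsilon-\rho}{\sigma}$. On the other hand, the subgradient inequality at $\bz$ applied to $\bw$ gives $\bs_z^\top(\bz - \bw) \ge g(\bz) - g(\bw) = \epsilon - \rho$, so Cauchy--Schwarz gives $\|\bs_z\| \ge \frac{\bs_z^\top(\bz-\bw)}{\|\bz-\bw\|} \ge \frac{\epsilon-\rho}{(\epsilon-\rho)/\sigma} = \sigma$, as needed.

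I would close by assembling the pieces: $\dist(\bx,\Xc_\rho) \le \frac{g(\bx)+\rho}{\|\bs_z\|} \le \frac{1}{\sigma}(g(\bx)+\rho) = \frac{1}{\sigma}[g(\bx)+\rho]_+$. The delicate points to verify carefully are the constraint-qualification justification for the normal-cone/KKT characterization of each projection (supplied by the strict feasibility established above), the nonvanishing of $\bs_z$ so that the division is legitimate, and the clean nesting of the two error bounds. The genuinely new ingredient beyond the standard error bound is the final step, which propagates the subgradient lower bound from $\{g = -\epsilon\}$ up to $\{g = -\rho\}$ rather than assuming it on the relevant boundary outright.
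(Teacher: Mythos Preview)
Your proof is correct and follows essentially the same route as the paper: project onto $\Xc_\rho$, bound the distance by $[g(\bx)+\rho]_+/\|\bs_z\|$ via convexity, and then propagate the subgradient lower bound from the level set $\{g=-\epsilon\}$ to $\{g=-\rho\}$ through a second projection onto $\Xc_\epsilon$. The only cosmetic difference is that the paper packages the projection facts into an auxiliary lemma and obtains $\|\bs_z\|\ge\sigma$ from subgradient monotonicity ($\|\bs_w\|^2 \le \bs_z^\top \bs_w$, hence $\|\bs_z\|\ge\|\bs_w\|\ge\sigma$), whereas you reach the same conclusion by bootstrapping the error bound at level $-\epsilon$ together with Cauchy--Schwarz.
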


It follows from Lemma \ref{lem:error_bound} and the cost gradient bound (Assumption \ref{ass:stand}) that,
\begin{align*}
    \ttwo & = \sum_{t=1}^{T} (f_t(\bx_\rho^\star) - f_t(\bx^\star)) \\
    & \leq \sum_{t=1}^{T} (f_t(\Pi_{\Xc_\rho}(\bx^\star)) - f_t(\bx^\star)) \tag{a} \label{eqn:lne_a}\\
    & \leq G_f T \dist(\bx^\star, \Xc_\rho)  \tag{b} \label{eqn:lne_b}\\
    & \leq \frac{G_f T}{\sigma} [g(\bx^\star) + \rho]_+ \leq \frac{G_f \rho}{\sigma} T, \tag{c} \label{eqn:lne_c}
\end{align*}
where \eqref{eqn:lne_a} uses the minimality of $\bx_\rho^\star$, \eqref{eqn:lne_b} uses the cost gradient bound, and \eqref{eqn:lne_c} uses Lemma \ref{lem:error_bound} and that $g(\bx^\star) \leq 0$.

Combining the bounds on Term I and Term II yields,
\begin{equation*}
    \reg_T \leq \frac{2 R^2}{\eta} + \frac{\eta}{2} G_f^2 T + \frac{G_f \rho}{\sigma} T,
\end{equation*}
which matches the bound in Theorem \ref{thm:ogd_pfs}.

\subsection{Feasibility Analysis}
\label{sec:feas_anal}

In this section, we give the feasibility analysis.
The central result in this section is Lemma \ref{lem:polyak_feas} below, which shows that the Polyak step-size shrinks the distance to a sub-level set of $g$.
This lemma combines the classical analysis of the Polyak step-size from \citet{polyak1969minimization} with Lemma \ref{lem:error_bound}.

\begin{lemma}[Polyak Step-size]
    \label{lem:polyak_feas}
    Let Assumption \ref{ass:bound_cons} and \ref{ass:curve} hold, and suppose that $\rho \in [0,\epsilon]$.
    Furthermore, consider the $\rho$-sublevel set of $g$, $\Xc_\rho := \{ \bx \in \Rb^d : g(\bx ) + \rho \leq 0 \}$.
    Consider any $\bx \in R \Bb$ and $\bs \in \partial g(\bx)$ such that $\bs \neq \bzero$, and let,
    \begin{equation}
        \label{eqn:poly_feas}
        \bx^+ = \Pi_{R \Bb} \left(\bx - \frac{[g(\bx) + \rho]_+}{\| \bs \|^2} \bs \right).
    \end{equation}
    Then, it holds that,
    \begin{equation}
        \label{eqn:dists}
        \dist^2(\bx^+, \Xc_\rho) \leq \left(1 - \frac{\sigma^2}{G_g^2} \right) \ \dist^2(\bx, \Xc_\rho).
    \end{equation}
\end{lemma}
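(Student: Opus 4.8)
The plan is to adapt the classical one-step contraction analysis of the Polyak subgradient method, but to replace the usual ``distance to the minimizer'' by the distance to the sublevel set $\Xc_\rho$, and to use the error bound of Lemma~\ref{lem:error_bound} to convert the function-value decrease into a decrease in this distance. Throughout, I would fix the reference point $\bz := \Pi_{\Xc_\rho}(\bx)$, so that $\|\bx - \bz\| = \dist(\bx,\Xc_\rho)$ and $g(\bz) \leq -\rho$. I would first note that $\Xc_\rho$ is nonempty (it contains $\Xc'$ because $\rho \leq \epsilon$), closed, and convex, and that $\Xc_\rho \subseteq \Xc \subseteq R\Bb$ by Assumption~\ref{ass:feas}; in particular $\bz \in R\Bb$, which will matter for the projection step.

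First I would dispose of the trivial case $g(\bx) + \rho \leq 0$: here $\bx \in \Xc_\rho$, so $\dist(\bx,\Xc_\rho) = 0$, the Polyak step is null, and $\bx^+ = \Pi_{R\Bb}(\bx) = \bx$, making both sides of \eqref{eqn:dists} equal to zero. So I may assume $g(\bx) + \rho > 0$, hence $[g(\bx)+\rho]_+ = g(\bx)+\rho$. Next I would peel off the outer projection onto $R\Bb$. Since $\bz \in R\Bb$ and projection onto a convex set is nonexpansive toward points of that set, writing $\hat\bx := \bx - \frac{g(\bx)+\rho}{\|\bs\|^2}\bs$ for the unprojected update gives
\begin{equation*}
    \dist(\bx^+, \Xc_\rho) \leq \|\bx^+ - \bz\| = \|\Pi_{R\Bb}(\hat\bx) - \bz\| \leq \|\hat\bx - \bz\|,
\end{equation*}
so it suffices to bound $\|\hat\bx - \bz\|^2$.

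The core computation is to expand the square and use convexity. Expanding,
\begin{equation*}
    \|\hat\bx - \bz\|^2 = \|\bx - \bz\|^2 - 2\frac{g(\bx)+\rho}{\|\bs\|^2}\,\bs^\top(\bx - \bz) + \frac{(g(\bx)+\rho)^2}{\|\bs\|^2},
\end{equation*}
and then applying the subgradient inequality $\bs^\top(\bx - \bz) \geq g(\bx) - g(\bz) \geq g(\bx) + \rho$ (the last step because $g(\bz) \leq -\rho$) cancels one factor and yields $\|\hat\bx - \bz\|^2 \leq \|\bx - \bz\|^2 - \frac{(g(\bx)+\rho)^2}{\|\bs\|^2}$. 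Finally I would invoke Lemma~\ref{lem:error_bound}, which gives $g(\bx) + \rho \geq \sigma\,\dist(\bx,\Xc_\rho) = \sigma\|\bx - \bz\|$, together with $\|\bs\| \leq G_g$ from Assumption~\ref{ass:bound_cons}, to obtain $\frac{(g(\bx)+\rho)^2}{\|\bs\|^2} \geq \frac{\sigma^2}{G_g^2}\|\bx - \bz\|^2$, from which \eqref{eqn:dists} follows.

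The step I expect to require the most care is this last combination. The classical Polyak analysis by itself only certifies an \emph{additive} decrease of the squared distance by $(g(\bx)+\rho)^2/\|\bs\|^2$; turning this into the \emph{multiplicative} contraction factor $1 - \sigma^2/G_g^2$ is precisely where the error bound of Lemma~\ref{lem:error_bound} is indispensable, since it is what ties the function-value gap $g(\bx)+\rho$ back to the very quantity $\dist(\bx,\Xc_\rho)$ being contracted. I would also be careful to verify that the hypotheses of Lemma~\ref{lem:error_bound} hold ($\bx \in R\Bb$ and $\rho \in [0,\epsilon]$) and that $\bz \in R\Bb$, so that the nonexpansiveness of $\Pi_{R\Bb}$ toward $\bz$ is legitimate.
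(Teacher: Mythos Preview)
Your proposal is correct and follows essentially the same route as the paper's own proof: handle the trivial case $g(\bx)+\rho\le 0$, drop the outer projection onto $R\Bb$ by nonexpansiveness toward $\bz=\Pi_{\Xc_\rho}(\bx)\in R\Bb$, expand the square and use the subgradient inequality to obtain the additive decrease $(g(\bx)+\rho)^2/\|\bs\|^2$, then apply Assumption~\ref{ass:bound_cons} and Lemma~\ref{lem:error_bound} to convert this into the multiplicative factor $1-\sigma^2/G_g^2$. If anything, you are slightly more explicit than the paper in checking that $\Xc_\rho$ is nonempty and that $\bz\in R\Bb$ before invoking nonexpansiveness.
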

\begin{proof}
    Let $\bv = \Pi_{\Xc_\rho} (\bx)$ and denote $g_\rho (\bx) := g(\bx) + \rho$.
    First, note that if $g_\rho (\bx) \leq 0$, then \eqref{eqn:poly_feas} becomes $\bx^+ = \bx$ and therefore $g_\rho(\bx^+) = g_\rho (\bx) \leq 0$ and \eqref{eqn:dists} is satisfied with both sides $0$.
    Therefore, we take $g_\rho(\bx) > 0$ for the remainder (which ensures that $[g_\rho (\bx)]_+ = g_\rho (\bx)$).
    Then, it follows that,
    \begin{align*}
        \dist^2(\bx^+, \Xc_\rho) & \leq \| \bx^+ - \bv \|^2 \\
        & \leq \| \bx - \frac{g_\rho(\bx)}{\| \bs \|^2} \bs - \bv \|^2 \tag{a} \label{eqn:a}\\
        & \begin{aligned}
            =\ & \dist^2(\bx, \Xc_\rho) - 2 \frac{g_\rho(\bx)}{\| \bs \|^2} \bs^\top (\bx - \bv)\\
            & + \frac{g_\rho(\bx)^2}{\| \bs \|^2}
        \end{aligned}  \\
        & \leq \dist^2(\bx, \Xc_\rho) - \frac{g_\rho(\bx)^2}{\| \bs \|^2} \tag{b} \label{eqn:b} \\
        & \leq \dist^2(\bx, \Xc_\rho) - \frac{g_\rho(\bx)^2}{G_g^2} \tag{c} \label{eqn:c}\\
        & \leq \left(1 - \frac{\sigma^2}{G_g^2} \right) \dist^2(\bx, \Xc_\rho), \tag{d} \label{eqn:d}
    \end{align*}
    where \eqref{eqn:a} uses the Pythagorean theorem of the projection, \eqref{eqn:b} uses that $\bs^\top (\bx - \bv) \geq g_\rho(\bx) - g_\rho(\bv) \geq g_\rho(\bx)$ due to the fact that $\bs \in \partial g(\bx)$ and $\bv \in \Xc_\rho$, \eqref{eqn:c} uses that $\| \bs \| \leq G_g$ due to Assumption \ref{ass:bound_cons}, and \eqref{eqn:d} uses Lemma~\ref{lem:error_bound}.
\end{proof}



Using Lemma \ref{lem:polyak_feas}, we will show that the distance between the action $\bx_{t}$ and the tightened feasible set $\Xc_\rho$ will not increase too much in each round, and therefore that $\bx_t$ always stays close to $\Xc_\rho$.
The key difficulty in doing so is that Lemma \ref{lem:polyak_feas} uses the ``exact'' Polyak step-size, while Algorithm \ref{alg:ogd_pfs} uses the first-order approximation of the constraint value in the Polyak step-size (i.e. as in \eqref{eqn:pfs}).
In order to handle this disparity, we introduce the ``fictitious'' iterate $\bz_{t+1}$.
We define $\bz_{t+1}$ such that $\bz_{t+1} = \bx_t$ when $\bs_t = \bzero$, and otherwise take,
\begin{equation*}
    \bz_{t+1} = \Pi_{R \Bb} \left(\bx_t - \frac{[g_t + \rho]_+}{\| \bs_t \|^2} \bs_t \right).
\end{equation*}
We can interpret $\bz_{t+1}$ as a feasibility step that is taken directly from the previous action $\bx_t$.
Importantly, the update for $\bz_{t+1}$ matches the form in \eqref{eqn:poly_feas} and therefore we will be able to directly apply Lemma \ref{lem:polyak_feas} to analyze $\bz_{t+1}$.

Using $\bz_{t+1}$, we study the distance between $\bx_{t+1}$ and~$\Xc_\rho$,
\begin{align*}
    \dist(\bx_{t+1}, \Xc_\rho) & \leq \| \bx_{t+1} - \Pi_{\Xc_\rho}(\bz_{t+1}) \|\\
    & \leq \underbrace{\| \bx_{t+1} - \bz_{t+1} \|}_{\tone} + \underbrace{\dist(\bz_{t+1}, \Xc_\rho)}_{\ttwo},
\end{align*}

\begin{figure*}[t]
    \centering
    \begin{subfigure}[t]{0.28\textwidth}
        \centering    
        \includegraphics[width=\columnwidth]{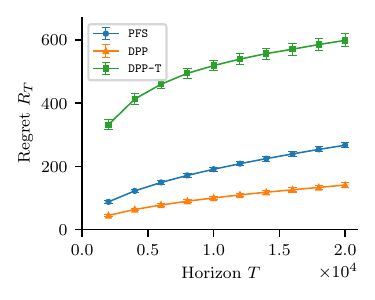}
        \vspace{-0.2in}
        \caption{}
        \label{fig:expers:a}
    \end{subfigure}
    \hspace{0.04\textwidth}
    \begin{subfigure}[t]{0.28\textwidth}
        \centering    
        \includegraphics[width=\columnwidth]{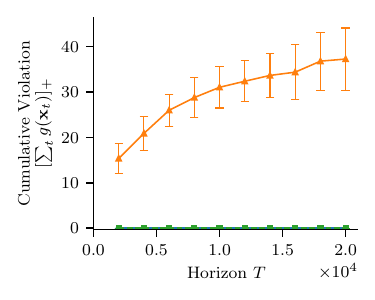}
        \vspace{-0.2in}
        \caption{}
        \label{fig:expers:b}
    \end{subfigure}
    \hspace{0.04\textwidth}
    \begin{subfigure}[t]{0.28\textwidth}
        \centering    
        \includegraphics[width=\columnwidth]{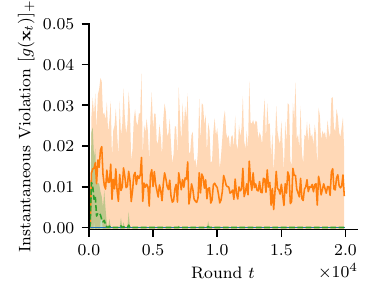}
        \vspace{-0.2in}
        \caption{}
        \label{fig:expers:c}
    \end{subfigure}
   \caption{Experiment results for our algorithm (labeled \texttt{PFS}), alongside the algorithm from \citet{yu2017online} (labeled \texttt{DPP}). We also include a version of the algorithm in \citet{yu2017online} where the constraint is tightened (labeled \texttt{DPP-T}). The points indicate the average over 30 trials and the error bars and shading are $\pm 1$ standard deviation.}
   \label{fig:expers}
\end{figure*}

We start with Term I.
First, note that $\bz_{t+1}$ can be written as $\bz_{t+1} = \Pi_{R \Bb}(\Pi_{\Hc_t} (\bx_t))$, where $\Hc_t$ is the halfspace defined in \eqref{eqn:half}.
Therefore, it follows that,
\begin{equation}
\label{eqn:const_tone}
\begin{split}
    \tone & = \| \Pi_{R \Bb}(\Pi_{\Hc_t} (\by_t)) - \Pi_{R \Bb}(\Pi_{\Hc_t} (\bx_t)) \|\\
    & \leq \| \Pi_{\Hc_t} (\by_t) - \Pi_{\Hc_t} (\bx_t) \|\\
    & \leq \| \by_t - \bx_t \| \\
    & = \eta \| \nabla f_t (\bx_t) \| \\
    & \leq \eta G_f,
\end{split}
\end{equation}
where the first two lines use the non-expansiveness of the projection, and the last line uses the cost gradient bound. 

Next, we look at Term II.
If $\bs_t = \bzero$, then it holds that $\bz_{t+1} = \bx_t$ and $g(\bx_t) = \min_{\bx} g(\bx) \leq - \rho$, which implies that \mbox{$\ttwo = 0$}.
Alternatively, if $\bs_t \neq \bzero$, then applying Lemma \ref{lem:polyak_feas} yields $(\ttwo)^2 \leq \gamma\ \dist^2(\bx_t, \Xc_\rho)$, where $\gamma := 1 - \frac{\sigma^2}{G_g^2}$.
Therefore, it holds in any case that,
\begin{align*}
    \ttwo \leq \sqrt{\gamma} \dist(\bx_t, \Xc_\rho)
\end{align*}

Combining Term I and Term II yields,
\begin{align*}
    \dist(\bx_{t+1}, \Xc_\rho) & \leq \sqrt{\gamma} \dist(\bx_{t}, \Xc_\rho) + \eta G_f\\
    & \leq \gamma^{t/2} \dist(\bx_{1}, \Xc_\rho) + \eta G_f \sum_{s=0}^{t-1} (\sqrt{\gamma})^s\\
    & \leq \gamma^{t/2} \dist(\bx_{1}, \Xc_\rho) + \frac{\eta G_f}{1 - \sqrt{\gamma}}
\end{align*}
where we apply the bound recursively in the second line, and use the fact that $\gamma \in [0,1)$ in the third line.
Then, applying the the subgradient bound (Assumption \ref{ass:bound_cons}),
\begin{align*}
    g(\bx_t) & = g(\bx_t) - g(\Pi_{\Xc_\rho} (\bx_t)) + g(\Pi_{\Xc_\rho} (\bx_t))\\
    & \leq G_g \dist(\bx_t, \Xc_\rho) - \rho \\
    & \leq G_g \gamma^{(t-1)/2} \dist(\bx_{1}, \Xc_\rho) + \frac{\eta G_g  G_f}{1 - \sqrt{\gamma}} - \rho.
\end{align*}
This matches the constraint violation guarantee in Theorem~\ref{thm:ogd_pfs}.



\section{Numerical Experiments}
\label{sec:num_exp}

Although our primary contribution is our theoretical results, we also give numerical experiments to demonstrate the functionality of the algorithm and provide some empirical verification of the theoretical results.\footnote{Our experiment code is available at \url{https://github.com/shutch1/OCO-Polyak-Feasibility-Steps}.}
In these experiments, we benchmark the performance of our algorithm with the algorithm from \citet{yu2017online} as it has the best regret bound among those in Table \ref{tbl:comp}.

We consider a $2$-dimensional toy setting with quadratic cost functions $f_t(\bx) = 3 \| \bx - \bv_t \|^2$ and linear constraints $A \bx \leq \bb$.
We generate $\bv_t$ by sampling uniformly from $[0,1]^2$, and take $A = [I\ -I]^\top$ and $\bb = 0.5 \bone$, where we use $I$ to denote the $2 \times 2$ identity matrix.
Therefore, we can define the constraint function as $g(\bx) = \max_{i \in [4]} \ba_i^\top \bx - b_i$, where $\ba_i$ and $b_i$ are the $i$th row of $A$ and $\bb$ respectively.\footnote{Note that the constraint can also be written with the infinity-norm: $g(\bx) = \| x \|_{\infty} - b$.}
Accordingly, we take $G_f = \sqrt{2}$, $R = 1$, $G_g = 1$, $\epsilon = 0.25$ and $\sigma = \frac{1}{\sqrt{2}}$.

\begin{figure}[t]
    \centering    
    \includegraphics[width=0.7\columnwidth]{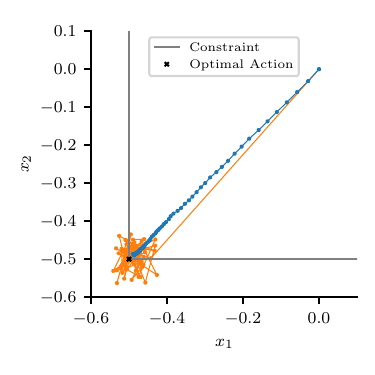}
    \vspace{-0.2in}
   \caption{Actions chosen by our algorithm and the one from \citet{yu2017online} in an experiment.}
   \label{fig:examp}
   \vspace{-0.1in}
\end{figure}

In this setting, we implement Algorithm \ref{alg:ogd_pfs} (labeled \texttt{PFS}) with the algorithm parameters chosen according to Theorem~\ref{thm:ogd_pfs}, as well as the algorithm in \citet{yu2017online} (labeled \texttt{DPP}) with the algorithm parameters chosen according to their Theorem 1, i.e. $\alpha = T, V = \sqrt{T}$.
We also implement the algorithm from \citet{yu2017online} with a tightened constraint $g_\rho(\bx) := g(\bx) + \rho$ for $\rho \in [0,\epsilon]$, as is used in our algorithm and that in \citet{mahdavi2012trading} and \citet{jenatton2016adaptive}.
Same as the aforementioned algorithms, we use a decreasing $\rho = \min(\epsilon, \frac{c}{\sqrt{T}})$, where $c>0$ is a parameter that we tune to reduce the violation.
We choose $c = 20$ to ensure that there is a small amount of constraint violation.
The tightened version of \texttt{DPP} is labeled \texttt{DPP-T}.

The results are shown in Figure \ref{fig:expers}.
Precisely, Figure \ref{fig:expers:a} and Figure \ref{fig:expers:b} show the regret and cumulative violation for $T = 2 \times 10^3,4 \times 10^3,...,2 \times 10^4$, where the marker indicates the mean over the $30$ trials and the errorbar indicates the standard deviation.
Figure \ref{fig:expers:c} shows the instantaneous violation $[g(\bx_t)]_+$ at each round for a fixed $T = 2 \times 10^4$, where the line shows the mean over $30$ trials and the shading indicates the standard deviation.
In these results, \texttt{DPP} enjoys smaller regret than \texttt{PFS}.
However, \texttt{DPP} incurs constraint violation, while \texttt{PFS} does not incur constraint violation.
By augmenting \texttt{DPP} with a tightened constraint, the constraint violation can be substantially reduced as shown for \texttt{DPP-T}.
However, this also results in a larger regret.

To provide intuition on the operation of our algorithm, we also include Figure~\ref{fig:examp}, which shows every $300$th action chosen by the algorithms in one simulation trial.
In this plot, it can be seen that \texttt{PFS} takes a conservative approach in gradually approaching the constraint boundary, which ensures constraint satisfaction at the cost of larger regret.

\section{Conclusion}

In this work, we give an algorithm for constrained OCO that uses Polyak feasibility steps to ensure anytime constraint satisfaction $g(\bx_t) \leq 0\ \forall t$ and $\Oc(\sqrt{T})$ regret, while only receiving feedback on the function value and subgradient at the played actions.
We foresee this approach being particularly relevant to safety-critical applications, where constraints must be satisfied despite having only limited constraint information.

\section*{Acknowledgements}

This work was supported by the National Science Foundation under grant \#2330154.

\section*{Impact Statement}

This paper presents work whose goal is to advance the field of 
Machine Learning. There are many potential societal consequences 
of our work, none which we feel must be specifically highlighted here.

\bibliographystyle{icml2025}
\bibliography{references}

\begin{thebibliography}{40}
\providecommand{\natexlab}[1]{#1}
\providecommand{\url}[1]{\texttt{#1}}
\expandafter\ifx\csname urlstyle\endcsname\relax
  \providecommand{\doi}[1]{doi: #1}\else
  \providecommand{\doi}{doi: \begingroup \urlstyle{rm}\Url}\fi

\bibitem[Agarwal et~al.(2019)Agarwal, Bullins, Hazan, Kakade, and
  Singh]{agarwal2019online}
Agarwal, N., Bullins, B., Hazan, E., Kakade, S., and Singh, K.
\newblock Online control with adversarial disturbances.
\newblock In \emph{International Conference on Machine Learning}, pp.\
  111--119. PMLR, 2019.

\bibitem[Boyd et~al.(2003)Boyd, Xiao, and Mutapcic]{boyd2003subgradient}
Boyd, S., Xiao, L., and Mutapcic, A.
\newblock Subgradient methods.
\newblock \emph{Lecture Notes of Stanford EE392}, 2003.
\newblock URL \url{https://web.stanford.edu/class/ee392o/subgrad_method.pdf}.

\bibitem[Castiglioni et~al.(2022)Castiglioni, Celli, Marchesi, Romano, and
  Gatti]{castiglioni2022unifying}
Castiglioni, M., Celli, A., Marchesi, A., Romano, G., and Gatti, N.
\newblock A unifying framework for online optimization with long-term
  constraints.
\newblock \emph{Advances in Neural Information Processing Systems},
  35:\penalty0 33589--33602, 2022.

\bibitem[Cesa-Bianchi et~al.(2004)Cesa-Bianchi, Conconi, and
  Gentile]{cesa2004generalization}
Cesa-Bianchi, N., Conconi, A., and Gentile, C.
\newblock On the generalization ability of on-line learning algorithms.
\newblock \emph{IEEE Transactions on Information Theory}, 50\penalty0
  (9):\penalty0 2050--2057, 2004.

\bibitem[Chen et~al.(2019)Chen, Zhang, and Karbasi]{chen2019projection}
Chen, L., Zhang, M., and Karbasi, A.
\newblock Projection-free bandit convex optimization.
\newblock In \emph{The 22nd International Conference on Artificial Intelligence
  and Statistics}, pp.\  2047--2056. PMLR, 2019.

\bibitem[Chen \& Giannakis(2018)Chen and Giannakis]{chen2018bandit}
Chen, T. and Giannakis, G.~B.
\newblock Bandit convex optimization for scalable and dynamic iot management.
\newblock \emph{IEEE Internet of Things Journal}, 6\penalty0 (1):\penalty0
  1276--1286, 2018.

\bibitem[Cutkosky et~al.(2023)Cutkosky, Mehta, and
  Orabona]{cutkosky2023optimal}
Cutkosky, A., Mehta, H., and Orabona, F.
\newblock Optimal stochastic non-smooth non-convex optimization through
  online-to-non-convex conversion.
\newblock In \emph{International Conference on Machine Learning}, pp.\
  6643--6670. PMLR, 2023.

\bibitem[Garber \& Kretzu(2020)Garber and Kretzu]{garber2020improved}
Garber, D. and Kretzu, B.
\newblock Improved regret bounds for projection-free bandit convex
  optimization.
\newblock In \emph{International Conference on Artificial Intelligence and
  Statistics}, pp.\  2196--2206. PMLR, 2020.

\bibitem[Garber \& Kretzu(2022)Garber and Kretzu]{garber2022new}
Garber, D. and Kretzu, B.
\newblock New projection-free algorithms for online convex optimization with
  adaptive regret guarantees.
\newblock In \emph{Conference on Learning Theory}, pp.\  2326--2359. PMLR,
  2022.

\bibitem[Garber \& Kretzu(2024)Garber and Kretzu]{garberprojection}
Garber, D. and Kretzu, B.
\newblock Projection-free online convex optimization with time-varying
  constraints.
\newblock In \emph{Forty-first International Conference on Machine Learning},
  2024.

\bibitem[Guo et~al.(2022)Guo, Liu, Wei, and Ying]{guo2022online}
Guo, H., Liu, X., Wei, H., and Ying, L.
\newblock Online convex optimization with hard constraints: Towards the best of
  two worlds and beyond.
\newblock \emph{Advances in Neural Information Processing Systems},
  35:\penalty0 36426--36439, 2022.

\bibitem[Hazan \& Kale(2012)Hazan and Kale]{hazan2012projection}
Hazan, E. and Kale, S.
\newblock Projection-free online learning.
\newblock In \emph{29th International Conference on Machine Learning, ICML
  2012}, pp.\  521--528, 2012.

\bibitem[Hu et~al.(2023)Hu, Wang, and Abernethy]{hu2024riemannian}
Hu, Z., Wang, G., and Abernethy, J.~D.
\newblock Riemannian projection-free online learning.
\newblock \emph{Advances in Neural Information Processing Systems}, 36, 2023.

\bibitem[Jenatton et~al.(2016)Jenatton, Huang, and
  Archambeau]{jenatton2016adaptive}
Jenatton, R., Huang, J., and Archambeau, C.
\newblock Adaptive algorithms for online convex optimization with long-term
  constraints.
\newblock In \emph{International Conference on Machine Learning}, pp.\
  402--411. PMLR, 2016.

\bibitem[Kolev et~al.(2023)Kolev, Martius, and Muehlebach]{kolev2023online}
Kolev, P., Martius, G., and Muehlebach, M.
\newblock Online learning under adversarial nonlinear constraints.
\newblock \emph{Advances in Neural Information Processing Systems}, 36, 2023.

\bibitem[Levy \& Krause(2019)Levy and Krause]{levy2019projection}
Levy, K. and Krause, A.
\newblock Projection free online learning over smooth sets.
\newblock In \emph{The 22nd international conference on artificial intelligence
  and statistics}, pp.\  1458--1466. PMLR, 2019.

\bibitem[Liakopoulos et~al.(2019)Liakopoulos, Destounis, Paschos, Spyropoulos,
  and Mertikopoulos]{liakopoulos2019cautious}
Liakopoulos, N., Destounis, A., Paschos, G., Spyropoulos, T., and
  Mertikopoulos, P.
\newblock Cautious regret minimization: Online optimization with long-term
  budget constraints.
\newblock In \emph{International Conference on Machine Learning}, pp.\
  3944--3952. PMLR, 2019.

\bibitem[Lu et~al.(2023)Lu, Brukhim, Gradu, and Hazan]{lu2023projection}
Lu, Z., Brukhim, N., Gradu, P., and Hazan, E.
\newblock Projection-free adaptive regret with membership oracles.
\newblock In \emph{International Conference on Algorithmic Learning Theory},
  pp.\  1055--1073. PMLR, 2023.

\bibitem[Mahdavi et~al.(2012)Mahdavi, Jin, and Yang]{mahdavi2012trading}
Mahdavi, M., Jin, R., and Yang, T.
\newblock Trading regret for efficiency: online convex optimization with long
  term constraints.
\newblock \emph{The Journal of Machine Learning Research}, 13\penalty0
  (1):\penalty0 2503--2528, 2012.

\bibitem[McMahan et~al.(2013)McMahan, Holt, Sculley, Young, Ebner, Grady, Nie,
  Phillips, Davydov, Golovin, et~al.]{mcmahan2013ad}
McMahan, H.~B., Holt, G., Sculley, D., Young, M., Ebner, D., Grady, J., Nie,
  L., Phillips, T., Davydov, E., Golovin, D., et~al.
\newblock Ad click prediction: a view from the trenches.
\newblock In \emph{Proceedings of the 19th ACM SIGKDD international conference
  on Knowledge discovery and data mining}, pp.\  1222--1230, 2013.

\bibitem[Mhammedi(2022)]{mhammedi2022efficient}
Mhammedi, Z.
\newblock Efficient projection-free online convex optimization with membership
  oracle.
\newblock In \emph{Conference on Learning Theory}, pp.\  5314--5390. PMLR,
  2022.

\bibitem[Mhammedi(2024)]{mhammedi2024online}
Mhammedi, Z.
\newblock Online convex optimization with a separation oracle.
\newblock \emph{arXiv preprint arXiv:2410.02476}, 2024.

\bibitem[Necoara \& Singh(2022)Necoara and Singh]{necoara2022stochastic}
Necoara, I. and Singh, N.~K.
\newblock Stochastic subgradient for composite convex optimization with
  functional constraints.
\newblock \emph{Journal of Machine Learning Research}, 23\penalty0
  (265):\penalty0 1--35, 2022.

\bibitem[Nedi{\'c}(2011)]{nedic2011random}
Nedi{\'c}, A.
\newblock Random algorithms for convex minimization problems.
\newblock \emph{Mathematical programming}, 129:\penalty0 225--253, 2011.

\bibitem[Nedi{\'c} \& Necoara(2019)Nedi{\'c} and Necoara]{nedic2019random}
Nedi{\'c}, A. and Necoara, I.
\newblock Random minibatch subgradient algorithms for convex problems with
  functional constraints.
\newblock \emph{Applied Mathematics \& Optimization}, 80\penalty0 (3):\penalty0
  801--833, 2019.

\bibitem[Neely \& Yu(2017)Neely and Yu]{neely2017online}
Neely, M.~J. and Yu, H.
\newblock Online convex optimization with time-varying constraints.
\newblock \emph{arXiv preprint arXiv:1702.04783}, 2017.

\bibitem[Polyak(1969)]{polyak1969minimization}
Polyak, B.~T.
\newblock Minimization of unsmooth functionals.
\newblock \emph{USSR Computational Mathematics and Mathematical Physics},
  9\penalty0 (3):\penalty0 14--29, 1969.

\bibitem[Polyak(2001)]{polyak2001random}
Polyak, B.~T.
\newblock Random algorithms for solving convex inequalities.
\newblock In \emph{Studies in Computational Mathematics}, volume~8, pp.\
  409--422. Elsevier, 2001.

\bibitem[Rockafellar(1970)]{rockafellar1970convex}
Rockafellar, R.
\newblock Convex analysis.
\newblock \emph{Princeton Math. Series}, 28, 1970.

\bibitem[Shalev-Shwartz \& Singer(2007)Shalev-Shwartz and
  Singer]{shalev2007primal}
Shalev-Shwartz, S. and Singer, Y.
\newblock A primal-dual perspective of online learning algorithms.
\newblock \emph{Machine Learning}, 69:\penalty0 115--142, 2007.

\bibitem[Singh \& Necoara(2025)Singh and Necoara]{singh2024stochastic}
Singh, N.~K. and Necoara, I.
\newblock Stochastic halfspace approximation method for convex optimization
  with nonsmooth functional constraints.
\newblock \emph{IEEE Transactions on Automatic Control}, 70\penalty0
  (1):\penalty0 479--486, 2025.
\newblock \doi{10.1109/TAC.2024.3426888}.

\bibitem[Tewari \& Murphy(2017)Tewari and Murphy]{tewari2017ads}
Tewari, A. and Murphy, S.~A.
\newblock From ads to interventions: Contextual bandits in mobile health.
\newblock \emph{Mobile health: sensors, analytic methods, and applications},
  pp.\  495--517, 2017.

\bibitem[Wang et~al.(2024)Wang, Yang, Jiang, Lu, Wang, Tang, Wan, and
  Zhang]{wang2024non}
Wang, Y., Yang, W., Jiang, W., Lu, S., Wang, B., Tang, H., Wan, Y., and Zhang,
  L.
\newblock Non-stationary projection-free online learning with dynamic and
  adaptive regret guarantees.
\newblock In \emph{Proceedings of the AAAI Conference on Artificial
  Intelligence}, volume~38, pp.\  15671--15679, 2024.

\bibitem[Yi et~al.(2021)Yi, Li, Yang, Xie, Chai, and Johansson]{yi2021regret}
Yi, X., Li, X., Yang, T., Xie, L., Chai, T., and Johansson, K.
\newblock Regret and cumulative constraint violation analysis for online convex
  optimization with long term constraints.
\newblock In \emph{International conference on machine learning}, pp.\
  11998--12008. PMLR, 2021.

\bibitem[Yi et~al.(2022)Yi, Li, Yang, Xie, Chai, and Johansson]{yi2022regret}
Yi, X., Li, X., Yang, T., Xie, L., Chai, T., and Johansson, K.~H.
\newblock Regret and cumulative constraint violation analysis for distributed
  online constrained convex optimization.
\newblock \emph{IEEE Transactions on Automatic Control}, 68\penalty0
  (5):\penalty0 2875--2890, 2022.

\bibitem[Yu \& Neely(2020)Yu and Neely]{yu2020low}
Yu, H. and Neely, M.~J.
\newblock A low complexity algorithm with $o(\sqrt{T})$ regret and $o(1)$
  constraint violations for online convex optimization with long term
  constraints.
\newblock \emph{Journal of Machine Learning Research}, 21\penalty0
  (1):\penalty0 1--24, 2020.

\bibitem[Yu et~al.(2017)Yu, Neely, and Wei]{yu2017online}
Yu, H., Neely, M., and Wei, X.
\newblock Online convex optimization with stochastic constraints.
\newblock \emph{Advances in Neural Information Processing Systems}, 30, 2017.

\bibitem[Yuan \& Lamperski(2018)Yuan and Lamperski]{yuan2018online}
Yuan, J. and Lamperski, A.
\newblock Online convex optimization for cumulative constraints.
\newblock \emph{Advances in Neural Information Processing Systems}, 31, 2018.

\bibitem[Zhang et~al.(2025)Zhang, He, and Muehlebach]{zhang2025primal}
Zhang, L., He, N., and Muehlebach, M.
\newblock Primal methods for variational inequality problems with functional
  constraints.
\newblock \emph{Mathematical Programming}, pp.\  1--32, 2025.

\bibitem[Zinkevich(2003)]{zinkevich2003online}
Zinkevich, M.
\newblock Online convex programming and generalized infinitesimal gradient
  ascent.
\newblock In \emph{International Conference on Machine Learning}, pp.\
  928--936, 2003.

\end{thebibliography}

\onecolumn

\appendix

\section{Extension of Prior Work to No Cumulative Constraint Violation}
\label{apx:ext}

In this section, we show how the prior work \citet{yu2017online} and \citet{yuan2018online} can be extended to show cumulative constraint satisfaction $\sum_{t=1}^{T} g(\bx_t) \leq 0$ under Assumption \ref{ass:curve} (which is the same as Assumption 1 in \citet{mahdavi2012trading}).
This follows a similar approach to Theorem 8 in \cite{mahdavi2012trading}, but we give it for completeness.

The algorithm in \citet{yu2017online} guarantees $\reg_T \leq C_R \sqrt{T}$ and $\sum_{t=1}^{T} g(\bx_t) \leq C_V \sqrt{T}$ for some constants $C_R, C_V > 0$.
Therefore, by applying this algorithm to the tightened constraint $g_\rho (\bx) := g(\bx) + \rho$, we can guarantee that $\sum_{t=1}^{T} g_\rho (\bx_t) \leq C_V \sqrt{T}$ and,
\begin{equation}
    \label{eqn:tight_reg}
    \reg_T^\rho := \sum_{t=1}^{T} f_t(\bx_t) - \min_{\bx \in \Xc_\rho} \sum_{t=1}^{T} f_t(\bx) \leq C_R \sqrt{T},
\end{equation}
where $\Xc_\rho$ is the sub-level set of the tightened constraint defined in \eqref{eqn:tight}.
Choosing $\rho = \min\left( \epsilon, \frac{C_V}{\sqrt{T}} \right)$ (where $\epsilon$ is defined in Assumption \ref{ass:curve}) and taking $T \geq \frac{C_V^2}{\epsilon^2}$ ensures that,
\begin{equation}
    \label{eqn:tightening}
    \sum_{t=1}^{T} g (\bx_t)  = \sum_{t=1}^{T} g_\rho (\bx_t) - \rho T \leq C_V \sqrt{T} - \min\left( \epsilon, \frac{C_V}{\sqrt{T}} \right) T = 0.
\end{equation}
Then, with $\bx_\rho^\star \in \argmin_{\bx \in \Xc_\rho} \sum_{t=1}^{T} f_t(\bx)$,
\begin{equation*}
    \reg_T = \reg_T^\rho + \sum_{t=1}^{T} (f_t(\bx_\rho^\star) - f_t(\bx^\star)) \leq \reg_T^\rho + \sum_{t=1}^{T} (f_t(\Pi_{\Xc_\rho}(\bx^\star)) - f_t(\bx^\star)) \leq \reg_T^\rho + T G_g \dist(\bx^\star, \Xc_\rho) 
\end{equation*}
Then, applying Lemma \ref{lem:error_bound} and \eqref{eqn:tight_reg}, and using the fact that $g(\bx^\star) \leq 0$ yields,
\begin{equation*}
    \reg_T \leq C_R \sqrt{T} + T \frac{G_g}{\sigma} [g(\bx^\star) + \rho]_+ \leq C_R \sqrt{T} + T \frac{G_g}{\sigma} \rho \leq C_R \sqrt{T} + \frac{G_g C_V}{\sigma} \sqrt{T}.
\end{equation*}
Therefore, we have shown that the algorithm from \citet{yu2017online} can be extended to show $\Oc(\sqrt{T})$ regret and cumulative constraint satisfaction $\sum_{t=1}^{T} g(\bx_t) \leq 0$, provided that Assumption \ref{ass:curve} holds and that $T$ is sufficiently large, i.e. $T \geq \frac{C_V^2}{\epsilon^2}$.
Note that the requirement that $T$ is sufficiently large is also required by other works that guarantee cumulative constraint satisfaction, e.g. \cite{mahdavi2012trading,jenatton2016adaptive}.

We can use a similar process to give guarantees for \citet{yuan2018online}.
The original guarantees in \citet{yuan2018online} are of the form $\reg_T \leq C_R T^{\max(\beta,1-\beta)}$ and $\sum_{t=1}^{T} [g(\bx_t)]_+ \leq C_V T^{1 - \beta/2}$.
Therefore, choosing $\rho = \min\left( \epsilon, \frac{C_V}{T^{\beta/2}} \right)$ and taking $T \geq \frac{C_V^{2/\beta}}{\epsilon^{2/\beta}}$ ensures that $\sum_{t=1}^{T} g(\bx_t) \leq 0$.
Also, the resulting regret is,
\begin{equation*}
    \reg_T \leq C_R T^{\max(\beta,1-\beta)} + \frac{G_g C_V}{\sigma} T^{1 - \beta/2}.
\end{equation*}
The order of the bound $\max(\beta,1-\beta,1-\beta/2)$ is minimized when $\beta = \frac{2}{3}$ and therefore, we get that $\reg_T = \Oc(T^{2/3})$.
Note that \citet{yuan2018online} also gives a bound of the form $\sum_{t=1}^{T} ([g(\bx_t)]_+)^2 \leq C_V T^{1 - \beta}$, and therefore one might hope to guarantee $\sum_{t=1}^{T} ([g(\bx_t)]_+)^2 \leq 0$.
Unfortunately, the analysis approach does not immediately show this because the tightening parameter $\rho$ cannot be used to cancel out the violation as in \eqref{eqn:tightening}.

\section{Comparison of Algorithm with Related Work}
\label{apx:alg_comp}

In this section, we discuss the differences between our algorithm design and the algorithm design in \citet{zhang2025primal}, which studies constrained variational inequalities.
Although \citet{zhang2025primal} studies a different setting and therefore uses a different analysis, there are some apparent similarities in algorithm design that we discuss here.
To illustrate the similarities and differences, we can write our algorithm in a different form that allows for more direct comparison.
In particular, when the cost functions are fixed $f_t = f$ and the constraint function is smooth, we can write \textit{our algorithm} (with $\rho = 0$) as,
\begin{equation}
    \label{eqn:our_alg}
    \bx_{t+1} = \Pi_{R \mathbb{B}} \left( \bx_t - \eta \nabla f(\bx_t) - \frac{[g(\bx_t) - \eta \nabla f(\bx_t)^\top \nabla g(\bx_t)]_+}{|| \nabla g(\bx_t) ||^2} \nabla g(\bx_t) \right),
\end{equation}
and the algorithm in \citet{zhang2025primal} can be written as (see their Algorithm 2),
\begin{equation}
    \label{eqn:their_alg}
    \bx_{t+1} = \bx_t - \eta \nabla f(\bx_t) - \eta \frac{[\alpha g(\bx_t) - \nabla f(\bx_t)^\top \nabla g(\bx_t)]_+}{|| \nabla g(\bx_t) ||^2} \nabla g(\bx_t).
\end{equation}
In both algorithms, the step-size $\eta$ is $\Theta(\frac{1}{\sqrt{T}})$ and $\alpha = G_g/R$ in \eqref{eqn:their_alg}. As such, we can see several differences:
\begin{itemize}
    \item \citet{zhang2025primal} applies the step-size $\eta$ to the entire third term in \eqref{eqn:their_alg}, while our algorithm in \eqref{eqn:our_alg} applies the step size only to  $\nabla f(\bx_t)^\top \nabla g(\bx_t)$ within the third term.
    \item \citet{zhang2025primal} applies the scaling $\alpha$ to $g(\bx_t)$ in \eqref{eqn:their_alg}, while our algorithm uses no such scaling in \eqref{eqn:our_alg}.
    \item \citet{zhang2025primal} uses an additional constraint to ensure that the iterates are bounded, whereas we use a projection on to $R \Bb$.
\end{itemize}


\section{Missing proofs}
\label{apx:ogd_pfs}

\subsection{Proof of Remark \ref{rem:slaters}}
\label{apx:slaters}

In this section, we show that Slater's condition implies Assumption \ref{ass:curve}, provided that Assumption \ref{ass:feas} holds.

\begin{proposition}
    Let Assumption \ref{ass:feas} hold. Then, suppose that Slater's condition holds, i.e. there exists positive real $\xi$ and $\by \in \Rb^d$ such that $g(\by) \leq - \xi$. It follows that Assumption \ref{ass:curve} holds with $\epsilon = c \xi$ and $\sigma = (1 - c)\frac{\xi}{2 R}$ for any $c \in (0,1)$.
\end{proposition}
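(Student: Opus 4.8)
The plan is to verify the two requirements of Assumption~\ref{ass:curve} directly from the Slater point $\by$ (with $g(\by) \le -\xi$) together with the boundedness of the feasible set. Concretely, taking $\epsilon = c\xi$, I must show that the level set $\Xc' = \{\bx : g(\bx) = -c\xi\}$ is nonempty, and that every subgradient on $\Xc'$ has norm at least $\sigma = (1-c)\frac{\xi}{2R}$. The single tool doing the real work is the subgradient inequality comparing a point of $\Xc'$ against the strictly-feasible point $\by$; everything else is continuity and Cauchy--Schwarz.

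For nonemptiness, I would first note that $c \in (0,1)$ gives $g(\by) \le -\xi < -c\xi$, so $g$ dips below the target value $-c\xi$ at $\by$. Next I would invoke Assumption~\ref{ass:feas}: since $\Xc = \{g \le 0\} \subseteq R\Bb$, any point $\bw$ with $\|\bw\| > R$ lies outside $\Xc$ and hence satisfies $g(\bw) > 0 > -c\xi$. A finite-valued convex $g$ on $\Rb^d$ is continuous, so restricting $g$ to the segment from $\by$ to such a $\bw$ and applying the intermediate value theorem produces a point $\bx_0$ with $g(\bx_0) = -c\xi$, showing $\Xc' \neq \emptyset$.

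For the subgradient lower bound, I would take an arbitrary $\bx_0 \in \Xc'$ and an arbitrary $\bs \in \partial g(\bx_0)$. The subgradient inequality at $\bx_0$ gives $g(\by) \ge g(\bx_0) + \bs^\top(\by - \bx_0)$, and substituting $g(\by) \le -\xi$ and $g(\bx_0) = -c\xi$ and rearranging yields $\bs^\top(\bx_0 - \by) \ge (1-c)\xi$. Both $\bx_0$ and $\by$ are feasible (their $g$-values are negative), hence lie in $R\Bb$ by Assumption~\ref{ass:feas}, so $\|\bx_0 - \by\| \le 2R$. Combining this with Cauchy--Schwarz, $(1-c)\xi \le \bs^\top(\bx_0 - \by) \le \|\bs\|\,\|\bx_0 - \by\| \le 2R\,\|\bs\|$, gives $\|\bs\| \ge (1-c)\xi/(2R) = \sigma$. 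Since $\bs$ and $\bx_0$ were arbitrary, this bounds every subgradient on $\Xc'$ from below, and positivity of $\epsilon$ and $\sigma$ is immediate from $c \in (0,1)$ and $\xi, R > 0$.

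I do not anticipate a serious obstacle, as the argument is short; the only point requiring care is that the intermediate-value step genuinely needs Assumption~\ref{ass:feas} to guarantee that $g$ rises above $-c\xi$ somewhere (boundedness of the sublevel set $\Xc$ is exactly what forces $g$ to grow), and that the factor $2R$ in $\sigma$ arises from bounding $\|\bx_0 - \by\|$ by the diameter of $R\Bb$ rather than by anything sharper. A tighter geometric estimate of $\|\bx_0 - \by\|$ would yield a larger $\sigma$, but the stated constant is all that is needed.
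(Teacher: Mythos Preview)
Your proposal is correct and follows essentially the same route as the paper: the subgradient-inequality/Cauchy--Schwarz argument for the lower bound is identical (the paper writes the same chain $-\xi \ge g(\by) \ge g(\bx) + \bs^\top(\by-\bx) \ge -\epsilon - 2R\|\bs\|$), and the nonemptiness step differs only cosmetically---the paper locates a boundary point of $\Xc$ with $g=0$ and interpolates between it and $\by$, whereas you pick a point outside $R\Bb$ with $g>0$ and interpolate from there, both relying on continuity of $g$ and the intermediate value theorem.
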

\begin{proof}
    It holds that $\Xc' = \{ \bx \in \Rb^d : g(\bx) = - \epsilon \}$ is non-empty because there exists $\bz,\by$ such that $g(\bz) = 0$ (given that $\Xc$ is nonempty and compact) and $g(\by) \leq - \xi$.
    Specifically, since $-\epsilon \in [-\xi,0]$ and $g$ is continuous (via Corollary 10.1.1 in \cite{rockafellar1970convex}), it holds that there exists $\bz$ such that $g(\bz) = - \epsilon$. 

    Then, we show that $\| \partial g(\bx) \| \geq \sigma$ for all $\bx \in \Xc'$.
    Indeed for all $\bx \in \Xc'$, it holds for all $\bs \in \partial g(\bx)$ that,
    \begin{align*}
        & -\xi  \geq g(\by) \geq g(\bx) + \bs^\top (\by - \bx) = - \epsilon + \bs^\top (\by - \bx) \geq - \epsilon - \| \bs \| \| \by - \bx \| \geq - \epsilon - 2 \| \bs \| R\\
        &  \quad \implies \quad \| \bs \| \geq \frac{\xi - \epsilon}{2 R} = (1 - c) \frac{\xi}{2 R} = \sigma,
    \end{align*}
    where we use the fact that $\| \by - \bx \| \leq 2 R$ given that $\bx,\by \in \Xc \subseteq R \Bb$.
\end{proof}

\subsection{Proof of Lemma \ref{lem:error_bound}}
\label{apx:err_bound}

In this section, we give the proof of Lemma \ref{lem:error_bound}.
This proof relies on the following lemma, which gives several properties of the projection on to the sub-level set of a convex function.

\begin{lemma}
    \label{lem:proj}
    Consider a closed convex function $h: \Rb^d \rightarrow \Rb$, where we use the notation $\Sc = \{ x \in \Rb^d : h(x) \leq 0 \}$. Let $\Kc$ be a convex set such that $\Sc \subseteq \Kc$.
    Assume that, for all $\bx \in \Kc$, it holds that $\| \bs \| \leq G$ for all $\bs \in \partial h(\bx)$, and when $h(\bx) = 0$, it holds that $\| \bs \| \geq \sigma$ for all $s \in \partial h(\bx)$.
    Then, take any $\bx \in \Kc$ such that $h(\bx) > 0$.
    Let $\bz = \Pi_{\bx : h(\bx) \leq 0} (\bx)$ and $\bs_x \in \partial h(\bx)$.
    There exists $\gamma \geq 0$ and $\bs_z \in \partial h(\bz)$ such that:
    \begin{enumerate}
        \item $h (\bz) = 0$,
        \item $\bx - \bz = \gamma \frac{\bs_z}{\| \bs_z \|}$,
        \item $\gamma \leq \frac{h(\bx)}{\| \bs_z \|}$,
        \item $\| \bs_z \|^2 \leq \bs_x^\top \bs_z$,
        \item $\| \bs_x \| \geq \| \bs_z \|$.
    \end{enumerate}
    \end{lemma}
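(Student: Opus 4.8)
The plan is to extract items 1 and 2 directly from the optimality conditions for the projection, and then obtain items 3--5 as short consequences of the subgradient inequality. The structural fact I will lean on throughout is that $\bz = \Pi_\Sc(\bx) \in \Sc \subseteq \Kc$, so the lemma's hypotheses apply \emph{at} $\bz$: since $h(\bz)=0$ (item 1) and $\bz \in \Kc$, every $\bs_z \in \partial h(\bz)$ satisfies $\|\bs_z\| \geq \sigma > 0$. This both legitimizes the normalization $\bs_z/\|\bs_z\|$ and supplies the constraint qualification used below. Note $h$ is finite and convex on $\Rb^d$, hence continuous, and $\Sc$ is nonempty, closed, and convex, so $\bz$ is well-defined.

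For item 1, I would show $\bz$ lies on the boundary level set $\{h=0\}$. If instead $h(\bz) < 0$, then by continuity $\bz$ is an interior point of $\Sc$, so the point $\bz + t(\bx - \bz)$ lies in $\Sc$ for all small $t>0$. The projection inequality $(\bx-\bz)^\top(\by - \bz) \leq 0$ for all $\by \in \Sc$, applied to this point, gives $t\|\bx-\bz\|^2 \leq 0$, forcing $\bx = \bz$ and hence $h(\bx) = h(\bz) < 0$, contradicting $h(\bx) > 0$. Therefore $h(\bz) = 0$.

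The main obstacle is item 2, where I must identify $\bx - \bz$ as a nonnegative multiple of a subgradient at $\bz$. The projection optimality condition gives $\bx - \bz \in N_\Sc(\bz)$, so it suffices to establish the normal-cone identity $N_\Sc(\bz) = \{\lambda \bs : \lambda \geq 0,\ \bs \in \partial h(\bz)\}$. This identity for a sub-level set holds under a Slater condition, which I would verify locally from $\bzero \notin \partial h(\bz)$: since $\partial h(\bz)$ is compact, convex, and avoids the origin, strict separation yields a direction $\bv$ with $\max_{\bs \in \partial h(\bz)} \bs^\top \bv < 0$, i.e.\ $h'(\bz;\bv) < 0$, so $h(\bz + t\bv) < 0$ for small $t>0$ and $\Sc$ has nonempty interior. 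Applying the identity, $\bx - \bz = \lambda \bs_z$ with $\lambda \geq 0$ and $\bs_z \in \partial h(\bz)$; because $h(\bx) > 0 = h(\bz)$ forces $\bx \neq \bz$, we have $\lambda > 0$, and setting $\gamma := \lambda\|\bs_z\| = \|\bx - \bz\| > 0$ gives $\bx - \bz = \gamma\, \bs_z/\|\bs_z\|$. Making this characterization airtight (in particular the constraint qualification) is the step I expect to require the most care.

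Items 3--5 then follow quickly. For item 3, the subgradient inequality at $\bz$ gives $h(\bx) \geq h(\bz) + \bs_z^\top(\bx-\bz) = \gamma\|\bs_z\|$, hence $\gamma \leq h(\bx)/\|\bs_z\|$. For item 4, the subgradient inequality at $\bx$ gives $0 = h(\bz) \geq h(\bx) + \bs_x^\top(\bz - \bx)$, so $\bs_x^\top(\bx - \bz) \geq h(\bx) \geq \gamma\|\bs_z\|$; substituting $\bx - \bz = \gamma\,\bs_z/\|\bs_z\|$ and dividing by $\gamma > 0$ yields $\bs_x^\top \bs_z \geq \|\bs_z\|^2$. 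Finally, for item 5, Cauchy--Schwarz turns item 4 into $\|\bs_z\|^2 \leq \bs_x^\top \bs_z \leq \|\bs_x\|\|\bs_z\|$, and dividing by $\|\bs_z\| > 0$ gives $\|\bs_x\| \geq \|\bs_z\|$.
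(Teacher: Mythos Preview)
Your proof is correct and follows essentially the same route as the paper: projection optimality plus the normal-cone characterization of a sublevel set (the paper invokes Rockafellar, Corollary~23.7.1, where you spell out the constraint qualification from $\bzero \notin \partial h(\bz)$), followed by subgradient inequalities and Cauchy--Schwarz. The only cosmetic differences are that for item~1 the paper uses the Lipschitz bound $G$ explicitly to build a feasible ball around $\bz$, and for item~4 the paper uses monotonicity of the subdifferential rather than the subgradient inequality at $\bx$; both yield the same conclusions.
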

\begin{proof}
    First, note that the $\Sc := \{ \bx \in \Rb^d : h(\bx) \leq 0 \}$ is a closed and convex set so this projection is well-defined.
    Then, we show each point in the following.

    \paragraph{1.}
    Suppose the contrary, i.e. that $h(\bz) \leq -\epsilon$ for some $\epsilon > 0$.
    It follows that $\bz + \frac{\epsilon}{G} \Bb \subseteq \Sc$ as
    \begin{equation*}
        \sup_{\bv \in \Bb} h(z + \frac{\epsilon}{G} \bv) \leq  h(\bz) +  G \frac{\epsilon}{G} \leq -\epsilon + \epsilon = 0.
    \end{equation*}
    Therefore, $\bz' = \bz + \frac{\bx - \bz}{\| \bx - \bz \|} \frac{\epsilon}{G} \in \Sc$ and
    \begin{equation*}
        \| \bz' - \bx \| = \| \bz + \frac{\bx - \bz}{\| \bx - \bz \|} \frac{\epsilon}{G} - \bx \| = \left( 1 - \frac{\epsilon}{\| \bx - \bz \| G} \right) \| \bz - \bx \| < \| \bz - \bx \|,
    \end{equation*} 
    contradicting the optimality of $\bz$ as the minimizer of the distance to $\Sc$.

    \paragraph{2.}
    Since $\bz = \Pi_{\Sc} (\bx)$ and $\Sc$ is a closed convex set, we know that the vector $\bx - \bz$ is normal to $\Sc$ at $\bz$.
    Furthermore, $\Sc = \{ \bx : h(\bx) \leq h(\bz) = 0 \}$ and the conditions of the lemma ensures that $\mathbf{0}$ is not a subgradient of $h$ at $\bz$ (and therefore $\bz$ is not a minimum of $h$) so \cite{rockafellar1970convex} (Corollary 23.7.1) tells us that there exists $\lambda \geq 0$ such that $\bx - \bz \in \lambda \partial h(\bz)$.
    Therefore, there exists $\gamma \geq 0$ and $\bs_z \in \partial h(\bz)$ such that $\bx - \bz = \gamma \frac{\bs_z}{\| \bs_z \|}$.

    \paragraph{3.}
    It follows from the definition of subgradient of $h$ that,
    \begin{equation*}
        h (\bx) \geq h(\bz) + \bs_z^\top(\bx - \bz) = \bs_z^\top(\bx - \bz) = \gamma \frac{\bs_z^\top \bs_z}{\| \bs_z \|} = \gamma \| \bs_z \|.
    \end{equation*}
    Rearranging yields,
    \begin{equation*}
        \gamma \leq \frac{h (\bx)}{\| \bs_z \|}.
    \end{equation*}

    \paragraph{4.}
    From the monotonicity of the subgradients of a convex function, it holds that
    \begin{align*}
        & 0 \leq (\bs_z - \bs_x)^\top (\bz - \bx) = -\frac{\gamma}{\| \bs_z \|} (\bs_z - \bs_x)^\top \bs_z\\
        & \implies \quad 0 \geq (\bs_z - \bs_x)^\top \bs_z\\
        & \implies \quad \| \bs_z \|^2 \leq \bs_x^\top \bs_z.
    \end{align*}

    \paragraph{5.}
    From \#4 and Cauchy-Schwarz,
    \begin{equation*}
        \| \bs_z \|^2 \leq \bs_x^\top \bs_z \leq \| \bs_x \| \| \bs_z \| \quad \implies \quad \| \bs_x \| \geq \| \bs_z \|.
    \end{equation*}
\end{proof}

Then, we give the proof of Lemma \ref{lem:error_bound} in the following.

\begin{proof}[Proof of Lemma \ref{lem:error_bound}]
    We use the notation $g_\rho(x) := g(x) + \rho$. First note that when $g_\rho (\bx) \leq 0$, then $\bx \in \Xc_\rho$ and therefore \eqref{eqn:error_bound} holds with both sides zero.
    Next, we consider the case where $g_\rho (\bx) > 0$.
    To do so, we will first show that the subgradient norm of $g_\rho (\bx)$ is lower bounded at all $\bx$ such that $g_\rho (\bx) = 0$.
    In particular, consider any $\bx$ such that $g_\rho (\bx) = 0$ (which necessarily satisfies $\bx \in R \Bb$), and let $\bz = \Pi_{g(\bx) \leq -\epsilon}(\bx)$ and $\bs_z \in \partial g(\bz)$. Then, it follows from Assumption \ref{ass:curve} that $\| \bs_z \| \geq \sigma$ and therefore we can apply Lemma \ref{lem:proj} \#5 by setting $\Kc \leftarrow R \Bb$, $h \leftarrow g + \epsilon$, $\sigma \leftarrow \sigma$ to get that for all $\bs_x \in \partial g(\bx)$,
    \begin{equation*}
        \| \bs_x \| \geq \| \bs_z \| \geq \sigma.
    \end{equation*}
    Note that we have used the fact that $\partial(g(\bx)+\epsilon)=\partial g_\rho (\bx) = \partial g(\bx)$ since they only vary by a constant.
    Since we know that the subgradient norm of $g_\rho$ is lower bounded at the boundary of $\Xc_\rho$, we can apply Lemma \ref{lem:proj} \#3 with $\Kc \leftarrow R \Bb$, $h \leftarrow g_\rho$, $G \leftarrow G_g$ and $\sigma \leftarrow \sigma$ to get that,
    \begin{equation*}
        \dist(\bx, \Xc_\rho) \leq \frac{g_\rho(\bx)}{\| \bs_z \|} \leq \frac{g_\rho(\bx)}{\sigma},
    \end{equation*}
    which verifies \eqref{eqn:error_bound}.
\end{proof}

\subsection{Proof of \eqref{eqn:tone}}
\label{apx:tone}

In this section, we prove \eqref{eqn:tone}.
This combines Fact \ref{fact:half} with the classical online gradient descent analysis from \cite{zinkevich2003online}.

\begin{proof}[Proof of \eqref{eqn:tone}]
    Given that $\bx^\star_\rho \in \Xc_\rho \subseteq \Hc_t$ and $\bx^\star_\rho \in \Xc_\rho \subseteq \Xc \subseteq R \Bb$ (and both $\Hc_t$ and $R \Bb$ are convex), it holds that,
    \begin{align*}
        \| \bx_{t+1} - \bx^\star_\rho \|^2 & = \| \Pi_{R \Bb} (\Pi_{\Hc_t} (\by_{t+1})) - \bx^\star_\rho \|^2\\
        & \leq \| \Pi_{\Hc_t} (\by_{t+1}) - \bx^\star_\rho \|^2\\
        & \leq \| \by_{t+1} - \bx^\star_\rho \|^2\\
        & \leq \| \bx_t - \eta \nabla f_t(\bx_t) - \bx^\star_\rho \|^2\\
        & = \| \bx_t - \bx^\star_\rho \|^2 - 2 \eta \nabla f_t(\bx_t)^\top (\bx_t - \bx^\star_\rho) + \eta^2 \| \nabla f_t(\bx_t) \|^2.
    \end{align*}
    Using this and the convexity of $f_t$, it follows that
    \begin{align*}
        \tone & = \sum_{t=1}^{T} (f_t(\bx_t) - f_t(\bx^\star_\rho))\\
        & \leq \sum_{t=1}^{T} \nabla f_t(\bx_t)^\top (\bx_t - \bx^\star_\rho)\\
        & \leq \frac{1}{2 \eta} \sum_{t=1}^{T} (\| \bx_t - \bx^\star_\rho \|^2 - \| \bx_{t+1} - \bx^\star_\rho \|^2) + \frac{\eta}{2} \sum_{t=1}^{T} \| \nabla f_t(\bx_t) \|^2\\
        & = \frac{1}{2 \eta} (\| \bx_1 - \bx^\star_\rho \|^2 - \| \bx_{T+1} - \bx^\star_\rho \|^2) + \frac{\eta}{2} \sum_{t=1}^{T} \| \nabla f_t(\bx_t) \|^2\\
        & \leq \frac{2}{\eta} R^2 + \frac{\eta}{2} G_f^2 T
    \end{align*}
    where the third inequality uses Assumptions \ref{ass:feas} and \ref{ass:stand}.
\end{proof}

\subsection{Proof of Remark \ref{rem:assms}}
\label{apx:assms}

In this section, we show that the guarantees of Corollary \ref{cor:no_viol} hold under slightly weaker assumptions. In particular, we use the following Assumption \ref{ass:weak_cost} instead of Assumption \ref{ass:stand}. Assumption \ref{ass:weak_cost} is weaker than Assumption \ref{ass:stand} in that it only requires that the cost gradients are bounded for all $\bx \in \Xc$, rather than all $\bx \in R \Bb$.

\begin{assumption}
    \label{ass:weak_cost}
    There exists positive real $G_f$ such that $\| \nabla f_t(\bx) \| \leq G_f$ for all $\bx \in \Xc$ and $t \in [T]$.
\end{assumption}

We then state the guarantees as a proposition and prove it.

\begin{proposition}
    Let Assumptions \ref{ass:feas}, \ref{ass:bound_cons}, \ref{ass:curve} and \ref{ass:weak_cost} hold.
    Suppose that $g(\bx_1) \leq -\alpha$ for some $\alpha \in (0,\epsilon]$, and let $\rho = \frac{\alpha}{\sqrt{T}}$ and $\eta = \frac{\xi \rho}{G_f G_g}$, where $\xi = 1 - \sqrt{\gamma}$.
    It follows that the actions chosen by Algorithm~\ref{alg:ogd_pfs} satisfy,
    \begin{align*}
        \reg_T & \leq \left(\frac{2 G_f G_g R^2}{\xi \alpha} + \frac{G_f \xi \alpha}{2 G_g} + \frac{G_f \alpha}{\sigma} \right) \sqrt{T},\\
        g(\bx_t) & \leq 0 \quad \forall t \in [T],
    \end{align*}
\end{proposition}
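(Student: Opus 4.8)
The plan is to reuse the feasibility and regret analyses already developed for Theorem \ref{thm:ogd_pfs} and Corollary \ref{cor:no_viol}, but to verify that every invocation of the cost-gradient bound $G_f$ occurs at a point lying in $\Xc$, so that Assumption \ref{ass:weak_cost} suffices in place of Assumption \ref{ass:stand}. The main obstacle is a circular dependency: the feasibility argument bounds $\| \nabla f_t(\bx_t) \|$ by $G_f$, but under Assumption \ref{ass:weak_cost} this bound is only available once we know $\bx_t \in \Xc$, whereas $\bx_t \in \Xc$ is exactly the feasibility conclusion we are trying to establish. I would break this loop with an induction on $t$, carrying the played actions' feasibility as the hypothesis and using it to justify the gradient bound at each step.

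For the feasibility claim, I would first note that the parameter choices give $\dist(\bx_1, \Xc_\rho) = 0$: since $\rho = \frac{\alpha}{\sqrt{T}} \leq \alpha$ and $g(\bx_1) \leq -\alpha \leq -\rho$, we have $\bx_1 \in \Xc_\rho$. The induction hypothesis is that $\bx_s \in \Xc$ (equivalently $g(\bx_s) \leq 0$) for all $s \in [t]$, with base case $t = 1$ holding since $\bx_1 \in \Xc_\rho \subseteq \Xc$. Under the hypothesis, Assumption \ref{ass:weak_cost} gives $\| \nabla f_s(\bx_s) \| \leq G_f$ for each $s \in [t]$, which is precisely what Term I of the feasibility analysis (equation \eqref{eqn:const_tone}) requires to obtain the one-step recursion $\dist(\bx_{s+1}, \Xc_\rho) \leq \sqrt{\gamma}\, \dist(\bx_s, \Xc_\rho) + \eta G_f$ for $s \in [t]$; note that Lemma \ref{lem:polyak_feas} and Term II are unaffected, as they only use the constraint subgradient bound. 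Unrolling from $\bx_1$ and using $\dist(\bx_1, \Xc_\rho) = 0$ yields $\dist(\bx_{t+1}, \Xc_\rho) \leq \frac{\eta G_f}{1 - \sqrt{\gamma}} = \frac{\rho}{G_g}$, where the last equality is the chosen $\eta = \frac{\xi \rho}{G_f G_g}$ with $\xi = 1 - \sqrt{\gamma}$. The subgradient bound (Assumption \ref{ass:bound_cons}) then gives $g(\bx_{t+1}) \leq G_g \dist(\bx_{t+1}, \Xc_\rho) - \rho \leq 0$, so $\bx_{t+1} \in \Xc$ and the induction closes; iterating to $t = T$ establishes $g(\bx_t) \leq 0$ for all $t \in [T]$.

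For the regret bound, I would argue that the decomposition of Section \ref{sec:reg_anal} carries over once feasibility is known. In Term I, the only use of $G_f$ is to bound $\| \nabla f_t(\bx_t) \|$ at the played actions, which are feasible by the induction, so \eqref{eqn:tone} holds unchanged. In Term II, I would invoke convexity of $f_t$ to keep the gradient evaluation at a feasible point: $f_t(\Pi_{\Xc_\rho}(\bx^\star)) - f_t(\bx^\star) \leq \| \nabla f_t(\Pi_{\Xc_\rho}(\bx^\star)) \|\, \dist(\bx^\star, \Xc_\rho)$, and since $\Pi_{\Xc_\rho}(\bx^\star) \in \Xc_\rho \subseteq \Xc$, Assumption \ref{ass:weak_cost} bounds this gradient by $G_f$; Lemma \ref{lem:error_bound} together with $g(\bx^\star) \leq 0$ then reproduces $\ttwo \leq \frac{G_f \rho}{\sigma} T$. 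Combining the terms gives the same bound as Theorem \ref{thm:ogd_pfs}, and substituting $\rho = \frac{\alpha}{\sqrt{T}}$ and $\eta = \frac{\xi \rho}{G_f G_g}$ yields the stated regret. The point requiring care is simply to confirm that every location where $G_f$ is used lies in $\Xc$: the played actions $\bx_t$ via the feasibility induction, and $\Pi_{\Xc_\rho}(\bx^\star)$ via $\Xc_\rho \subseteq \Xc$, with convexity of $f_t$ ensuring we never need the gradient bound outside $\Xc$.
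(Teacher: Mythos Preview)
Your proposal is correct and follows essentially the same approach as the paper: an induction on $t$ to break the circular dependency between feasibility and the gradient bound, followed by the observation that both terms of the regret decomposition only evaluate $\nabla f_t$ at points in $\Xc$. The only cosmetic difference is that for Term II you invoke convexity to place the gradient at the single feasible point $\Pi_{\Xc_\rho}(\bx^\star)$, whereas the paper argues that bounded gradients on the convex set $\Xc$ make $f_t$ Lipschitz there; both routes yield the same bound.
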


\begin{proof}
    First, we show that $\| \nabla f_t(\bx_t) \| \leq G_f$ for all $t \in [T]$ by induction over the first $\tau$ rounds.
    The base case holds because $\bx_1 \in \Xc_\rho \subseteq \Xc$ and therefore $\| \nabla f_t(\bx_t) \| \leq G_f$ by Assumption \ref{ass:weak_cost}.
    For the induction step, suppose that $\| \nabla f_t(\bx_t) \| \leq G_f$ for all $t \in [\tau]$.
    Then, note that the feasibility analysis in Section \ref{sec:feas_anal} only requires that the cost gradient is bounded at the previously played actions.
    In particular, this bound is applied in \eqref{eqn:const_tone}.
    Therefore, it holds that,
    \begin{equation*}
        g(\bx_{\tau+1}) \leq G_g \gamma^{t/2} \dist(\bx_{1}, \Xc_\rho) + \frac{\eta G_g  G_f}{1 - \sqrt{\gamma}} - \rho,
    \end{equation*}
    implying that $g(\bx_{\tau+1}) \leq 0$ with the specified choice of algorithm parameters.
    Since $\bx_{\tau+1} \in \Xc$, it holds that $\| \nabla f_{\tau+1}(\bx_{\tau+1}) \| \leq G_f$, and together with the induction hypothesis, it holds that $\| \nabla f_{t}(\bx_{t}) \| \leq G_f$ for all $t \in [\tau+1]$ and the induction step is complete.
    Thus, it holds that $\| \nabla f_t(\bx_t) \| \leq G_f$ for all $t \in [T]$.
    Along the way, we have also shown that $\bx_t \in \Xc$ for all $t \in [T]$.
    
    Next, we show that, because the cost gradients are bounded at the actions, the regret bound holds.
    In particular, the analysis of Term I in the regret analysis (see Section \ref{sec:reg_anal}) holds because it only requires that $\| \nabla f_t(\bx_t) \| \leq G_f$ for all $t \in [T]$ (as shown in Section \ref{apx:tone}).
    The analysis of Term II requires that $f_t(\Pi_{\Xc_\rho}(\bx^\star)) - f_t(\bx^\star) \leq G_f \| \Pi_{\Xc_\rho}(\bx^\star) - \bx^\star \|$.
    This holds because the cost gradients are assumed to be bounded on $\Xc$ and therefore all $f_t$ are Lipschitz on $\Xc$.
\end{proof}

\end{document}